\newcommand{\vect}[1]{\mathbf{#1}}
\newcommand{\vertiii}[1]{{\vert\kern-0.25ex\vert\kern-0.25ex\vert #1 
    \vert\kern-0.25ex\vert\kern-0.25ex\vert}}
\newtheorem{theorem}{Theorem}
\newtheorem{lemma}{Lemma}
\newtheorem{remark}{Remark}
\newtheorem{assumption}{Assumption}
\begin{document}
\title{Structure Learning of Sparse GGMs over Multiple Access  Networks}

\author{Mostafa~Tavassolipour,
        Armin~Karamzade,
        Reza~Mirzaeifard,
        Seyed~Abolfazl~Motahari,
        ~and~Mohammad-Taghi~Manzuri~Shalmani}

%

\maketitle

\begin{abstract}
A central machine is interested in estimating the underlying structure of a sparse Gaussian Graphical Model (GGM) from datasets distributed across multiple local machines. The local machines can communicate with the central machine through a wireless multiple access channel. In this paper, we are interested in designing effective strategies where reliable learning is feasible under power and bandwidth limitations. Two approaches are proposed: Signs and Uncoded methods. In Signs method, the local machines quantize their data into binary vectors and an optimal channel coding scheme is used to reliably send the vectors to the central machine where the structure is learned from the received data. In Uncoded method, data symbols are scaled and transmitted through the channel. The central machine uses the received noisy symbols to recover the structure. Theoretical results show that both methods can recover the structure with high probability for large enough sample size. Experimental results indicate the superiority of Signs method over Uncoded method under several circumstances. 

\end{abstract}

\begin{IEEEkeywords}
Structure learning, Gaussian graphical model, distributed learning
\end{IEEEkeywords}

\IEEEpeerreviewmaketitle

\section{Introduction} \label{sec:introduction}
\IEEEPARstart{I}{n} recent years, by the explosion of the volume of training data, distributed machine learning has become more important than ever. Many modern big datasets are distributed over several hosting machines which are connected to each other via some communication links. In such systems, designing distributed learning algorithms which efficiently exploit the available resource demands a careful design where intensive computational workloads and the amount of data communications are taken into account.

This paper is focused on the problem of structure learning in Gaussian Graphical Models (GGMs) in distributed environments. A GGM for a $d$-dimensional random vector $\vect x = (x_1, \cdots, x_d)^T \in \mathbb{R}^d$ is specified by a graph $\mathcal{G}(\mathcal{V}, \mathcal{E})$ where $\mathcal{V}=\{1,\cdots, d\}$ is the set of vertices and $\mathcal{E} \subseteq \mathcal{V}^2$ is the set of edges. The model comprises all $d$-dimensional normal distributions $\mathcal{N}(\mu, \Theta^{-1})$ where $\Theta$ is the precision  matrix with $\Theta_{jk} \neq 0$ iff $(j, k) \in \mathcal{E}$. It worths mentioning that GGM is indeed a Markov Random Field (MRF). 

In our problem setting, we assume that the data are distributed over multiple local machines so that each one contains a single dimension of the whole dataset. The local machines are connected to a central machine via a wireless medium. The central machine is responsible for inferring the conditional dependencies between the gathered data by the local machines. The system's block diagram is depicted in \figurename{}~\ref{fig:whole system diagram}. We also assume that the communication links between the local and the central machine are bandwidth limited implying that transmission of the whole local datasets to the central machine is impossible. Due to this constraint, each local machine transmits some information from its local dataset to the central machine. Then, the central machine estimates the underlying graph structure using received information from the local machines.

\begin{figure*}[t]
	\centering
	\includegraphics[scale=.7]{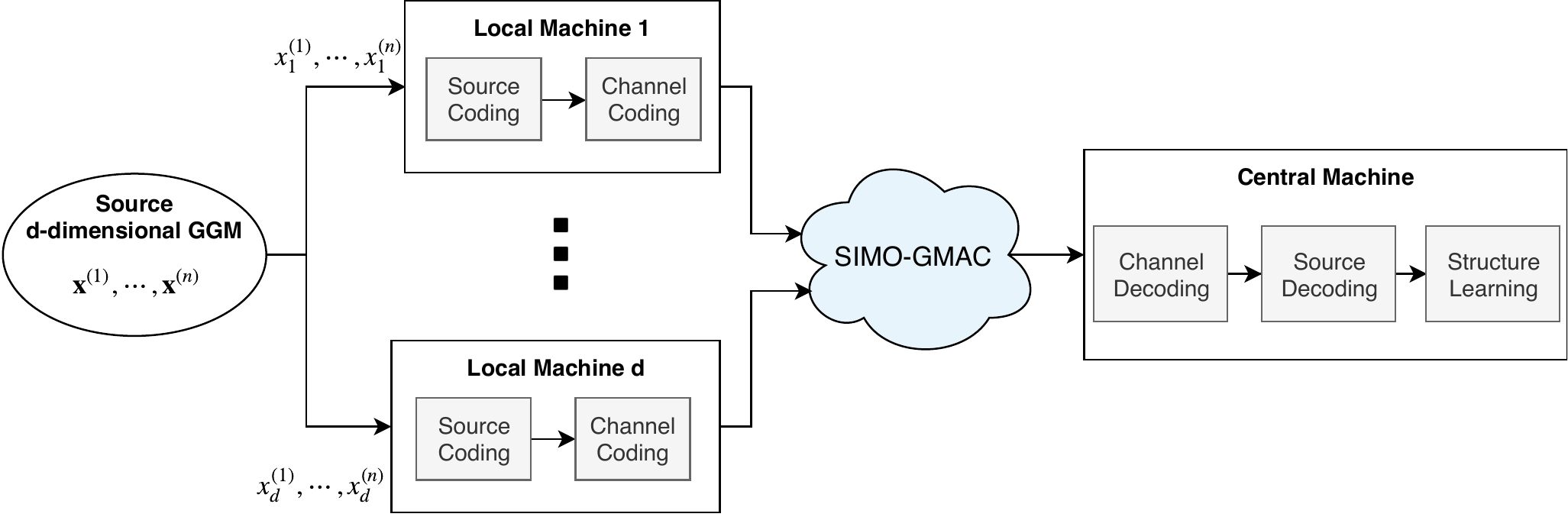}
	\caption{The whole system block diagram.}
	\label{fig:whole system diagram}
\end{figure*}

In this paper, we consider a wireless multiple access channel between the local machines and the central node. Each local machine is equipped with a single antenna while the central node is equipped with multiple antennas. Hence, the overall channel between the local and central machines is modeled as a single-input multiple-output multiple access channel (SIMO-MAC).

We have proposed two communication schemes for transmitting information from the local machines to the central machine. In the first scheme, we have separate the source coding from the channel coding. In this scheme, we quantize the source samples into single bits and assume that there exists a channel coding such that the bits can be sent through the channel reliably. We refer to this scheme as \emph{Signs method} in the paper.

In the second scheme, we do not use any source and channel coding. In this scheme, we put the source samples into the channel without any encoding. At the central machine, we directly estimate the underlying graph structure using received data from the channel. We refer to this scheme as \emph{Uncoded method}.

We have shown through theoretical analysis and experiments that by transmitting only 1 bit per sample; the central machine can reliably recover the underlying graph structure. More precisely, we have shown theoretically that by consuming only 1 bit per sample, under some mild conditions, the central machine can perfectly recover the graph structure with high probability. Moreover, the true signs of the edges weights of the graph are obtained.

The paper is organized as follows. In Section \ref{sec:related work}, we provide a brief review on structure learning of GGMs. Section \ref{sec:problem definition} describes the detail of our modeling for the source and the communication channel. In sections \ref{sec:signs method} and \ref{sec:uncoded method}, we describe Signs and Uncoded methods, respectively. Section \ref{sec:experiment} provides experimental results to compare and evaluate the proposed methods. Finally, Section \ref{sec:conclusion} concludes the paper.

\section{Related Work} \label{sec:related work}
The problem of structure learning of GGMs from data samples has applications in many fields including biology and social networks. For example, it has been used for gene regulatory networks reconstruction  in \cite{chai2014review, hecker2009gene} and  analysis of users relationships in social networks in \cite{xiang2010modeling}.  There are many studies addressing this problem from various perspectives \cite{ravikumar2011high, sojoudi2016equivalence,banerjee2008model}. 

The  Chow-Liu algorithm obtains the maximum likelihood estimate of the structure if the underlying graph is a tree \cite{chow1968approximating}. Although this algorithm is applicable for discrete random variables, it  can be used for tree structured GGMs in a similar manner \cite{anandkumar2012high}. Tavassolipour et al. \cite{tavassolipour2018learning} proposed a distributed version of the Chow-Liu algorithm and proved it can recover the underlying tree structure with high probability. Tan et al. in \cite{tan2011learning} and \cite{tan2011large} provided an analysis of the error exponent of the Chow-Liu algorithm on tree-structured GGMs.

GGMs have the property that the neighbors of each variable can be obtained by solving a linear regression problem for the corresponding variable on other variables. This approach is referred to as \emph{neighborhood selection} in the literature. For the sparse structures, there are some methods which penalize the linear regression problem with $\ell_1$ of the coefficient vector \cite{meinshausen2006high, chen2014selection}. 

Among the proposed methods for sparse structure estimation of GGMs, the $\ell_1$-regularized maximum likelihood approaches are more popular \cite{banerjee2008model,meinshausen2006high,hsieh2014quic}. This class of model is analyzed in several articles (see \cite{ravikumar2011high} and refs therein). For instance, Ravikumar et al. \cite{ravikumar2011high} analyzed the performance of the $\ell_1$-regularized maximum likelihood estimator (MLE) under high dimensional scaling. They showed that with probability converging to one, the estimated structure correctly specifies the zero pattern of the true precision matrix. A similar study is conducted by \cite{rothman2008sparse} which analyzed the consistency of the $\ell_1$-regularized MLE in the Frobenius norm.

Besides the lasso typed estimators, thresholding based estimators are proposed  for sparse recovering of the precision matrix \cite{sojoudi2016equivalence, anandkumar2012high}. For example, Sojoudi \cite{sojoudi2016equivalence} proposed a simple thresholding method and showed, under certain conditions, the resulting structure is identical to the structure obtained by the lasso.

In the distributed setting, there are several studies which address the problem of covariance/precision matrix estimation \cite{meng2013distributed, wiesel2012distributed, arroyo2016efficient}. Arroyo and Hou \cite{arroyo2016efficient} studied the problem of sparse precision matrix estimation in the situation where the samples are distributed among several machines. Their work differs from our setting in the sense that we assume the data are split across dimensions whereas they split the data across samples. Meng et al. \cite{meng2013distributed} addressed estimation of the precision matrix in a distributed manner where the zero pattern of the precision matrix is known in advanced.

\section{Problem Formulation} \label{sec:problem definition}
We are given $n$ i.i.d. random vectors drawn from a $d$-dimensional zero mean normal distribution $\mathcal{N}(\vect 0, Q_x)$ with $(Q_x)_{jj} = 1$. The focus of this paper is the problem of estimating the zero pattern of the sparse precision matrix $\Theta_x = Q_x^{-1}$ in a situation where the data is stored in $d$ separate local machines such that each machine possesses one dimension of the sample vectors.


Denoting the whole gathered data by $\{\vect x^{(1)}, \cdots, \vect x^{(n)}\}$, the $j$th machine captures the $j$-th dimension of the sample vectors. We denote the $j$-th dimension of the $i$-th sample by $x_j^{(i)}$, i.e. $\vect x^{(i)} = [x^{(i)}_1, \cdots, x^{(i)}_d]^T$. Hence, the local data at the $j$-th local machine is $\{x_j^{(1)}, \cdots, x_j^{(n)} \}$.


The probability density function of the normal distribution is given by
\begin{equation}
f(\vect x; \Theta) = \frac{1}{\sqrt{\det (2\pi\Theta^{-1})}} \exp\left\{-\frac{1}{2}\vect x^T \Theta \vect x \right\}.
\end{equation}
The negative log-likelihood of $n$ i.i.d. samples is given by
\begin{equation} \label{eq:likelihood function}
    g(\Theta) = \mathrm{tr}(\Theta S_x) - \log\det(\Theta),
\end{equation}
where  $S_x = \frac{1}{n} \sum_{i=1}^{n} \vect{x}^{(i)} \left(\vect x^{(i)}\right)^T$ is reffered to as sample covariance matrix. One of the well known methods for estimating the sparse precision matrix is to solve an $\ell_1$-regularized maximum likelihood function stated as
\begin{equation} \label{eq:ML problem}
\widehat\Theta = \arg\min_{\Theta \succ 0} \left\{g(\Theta) + \lambda_n \Vert \Theta \Vert_{1, \mathrm{off}}\right\} ,
\end{equation}
where $\Vert \Theta \Vert_{1, \mathrm{off}} := \sum_{j \neq k} \vert \Theta_{jk} \vert$, and $\lambda_n$ is a user defined regularization parameter. There is an efficient algorithm known as \emph{glasso} \cite{friedman2008sparse} for solving the above log-determinant program.

In \cite{ravikumar2011high}, it is stated that the Hessian matrix of  \eqref{eq:likelihood function} is given by
\begin{equation}
\Gamma = \nabla_{\Theta}^2 g(\Theta) \Big|_{\Theta = \Theta_x} = \Theta_x^{-1} \otimes \Theta_x^{-1}.
\end{equation}
where $\otimes$ is the Kronecker matrix product. The entry $\Gamma_{(j,k),(l,m)}$ corresponds to the second derivative $\frac{\partial^2 g}{\partial\Theta_{jk}\partial\Theta_{lm}}$, evaluated at $\Theta = \Theta_x$ and $\Gamma_{(j,k),(l,m)} = \mathrm{cov}\{x_j x_k, x_l x_m\}$.

We define the set of non-zero entries in the true precision matrix $\Theta_x$ as
\begin{equation}
S(\Theta_x) = \{(j,k) \in \mathcal{V}^2 | (\Theta_x)_{jk} \neq 0 \}.
\end{equation}
Recall that $\mathcal{V} = \{1, \cdots, d\}$. Note that $S(\Theta_x)$ includes the diagonal entries. Let $S^c(\Theta_x)$ be the complement set of $S(\Theta_x)$ which includes all pairs $(j,k)$ where  $(\Theta_x)_{jk} = 0$. For any two subsets $T$ and $T'$ of $V \times V$, the notation $\Gamma_{TT'}$ denotes a $|T|\times |T'|$ sub-matrix of $\Gamma$ with rows and columns indexed by $T$ and $T'$, respectively.

We adopt the incoherence condition used by Ravikumar et al. \cite{ravikumar2011high} to obtain error bounds on consistency of the solutions. We define max-row-sum norm of a $d$ by $d$ matrix $X$ as
\begin{equation}
    \vertiii{X}_{\infty} = \max_{i = 1,\cdots,d} \sum_{j = 1}^{d} \left\vert X_{ij} \right\vert.
\end{equation}

\begin{assumption} \label{assump: incoherence}
	(\textbf{Incoherence Condition \cite{ravikumar2011high}}) \\
	There exists some $\alpha \in (0, 1]$ such that
	\begin{equation}
         \vertiii{\Gamma_{S^c S} (\Gamma_{SS})^{-1}}_{\infty} \leq (1-\alpha).
	\end{equation}
\end{assumption}
An implication of the above condition is that the non-edge pairs cannot have strong influence on the edges. 

\begin{assumption} \label{assump:covariance control}
	(\textbf{Covariance Control \cite{ravikumar2011high}}) \\
	There exist constants $\kappa_{\Sigma},  \kappa_{\Gamma} < \infty$ such that
	\begin{align}
	    \vertiii{Q_x}_{\infty} &\leq \kappa_{\Sigma}, \\
	    \vertiii{\Gamma_{SS}^{-1}}_{\infty} &\leq \kappa_{\Gamma}.
	\end{align}
\end{assumption}
The above assumptions imply that the covariance elements along any row of $Q_x$ and $\Gamma_{SS}^{-1}$ have bounded $\ell_1$ norms.

Having no access to the original data,  the central machine finds the solution of the optimization problem \eqref{eq:ML problem} using received data from the local machines. The likelihood function $g(\Theta)$ in \eqref{eq:likelihood function} depends on the samples via the sample covariance matrix $S_x$. Thus, obtaining an appropriate approximate of the sample covariance matrix at the central machine would result in a good estimate of the underlying structure. 


\subsection{Channel Model} \label{sec:channel model}
There is a wireless channel between the local machines and the central node. In this paper, we assume that the channel can be modeled as a single-input multiple-output Gaussian multiple access channel (SIMO-GMAC) with additive white noise. The number of antennas at the receiver is assumed to be $m$. All transmitters have equal transmit power which is denoted by $p$. This implies, for all $j$, the following constraint on the transmit symbols should be satisfied:
\begin{equation}
\frac{1}{n} \sum_{i=1}^{n} \left|s_j^{(i)}\right|^2 \leq p,
\end{equation}
where $s_j^{(i)}$'s are the channel inputs at the local machine $j$.
Denoting the transmit symbols by vector $\vect s$, the channel output is modeled by 
\begin{equation} \label{eq:y=Hx+z}
\vect y =  H \vect s + \vect z,
\end{equation}
where $H\in \mathcal{C}^{m\times d}$ is assumed to be an invertible complex matrix. In the fading environments, the channel gains are drawn from independent circularly symmetric complex Gaussian distribution. The additive noise $\vect z$ is an independent circularly symmetric Gaussian vector with covaraience matrix  $\sigma_z^2 I_d$.

\section{Signs Method} \label{sec:signs method}
We assume that each local machine applies a sign function on its local dataset to obtain binary data. More precisely, given samples $\{x_j^{(1)}, \cdots, x_j^{(n)}\}$ at local machine $j$, it obtains the signs dataset $\{\hat x_j^{(1)}, \cdots, \hat x_j^{(n)}\}$ where $\hat{x}_j = \mathrm{sign}(x_j)$. Then, it transmits the binary data to the central machine with the rate of $1$ bit per sample using a channel encoder and decoder.

Denoting the bit rate of the channel at local machine $j$ by $R_j$, the achievable bit rates for all machines are characterized by \cite{tse2005fundamentals}
\begin{equation} \label{eq:channel rates region}
\sum_{k\in S} R_k \leq \lg \det \left(\frac{p}{\sigma_{z}^2} H^H_S H_S + I_{|S|} \right), \qquad \forall~ S \subseteq \{1, \cdots, d\},
\end{equation}
where  $H_S$ is the sub-matrix of $H$ that includes the rows and columns indexed by $S$, $H^H$ is the Hermitian of $H$, $I_{|S|}$ is the $|S|\times|S|$ identity matrix, and $\lg(\cdot)$ is the logarithm function in base 2. Throughout this section, we assume that $R_j \geq 1$, for all $j = 1, \cdots, d$. Therefore, there exists a channel encoding to transmit 1 bit per sample at each local machine.


At the central machine, our goal is to solve the optimization problem \eqref{eq:ML problem} on the received binary data from all local machines. In order to obtain a solution that is close to the solution obtained by the original data, we seek a suitable  approximation for the sample covariance matrix $S_x$ in \eqref{eq:ML problem}. Thus, our goal is to estimate the sample covariance matrix as accurate as possible using the received signs data. In this section, we propose an estimator for the sample covariance matrix and theoretically show its error decreases exponentially by the sample size $n$.

Let $x_j$ and $x_k$ be jointly normal with zero means, unit variances and correlation coefficient $\rho_{jk}$. If $\hat x_j$ and $\hat x_k$ be the corresponding sign variables, then the joint probability mass function (pmf) of $\hat x_j$ and $\hat x_k$ is given by \cite{bacon1963approximations}
\begin{equation} \label{eq:binary pmf}
\begin{array}{c|cc}
\hat x_j \backslash \hat x_k & -1 & +1	\\
\hline
-1 & 
\beta_{jk}/2  & (1-\beta_{jk})/2	\\
+1 & (1-\beta_{jk})/2 & \beta_{jk}/2	
\end{array}
\end{equation}
where $\beta_{jk} \in [0, 1]$ and given by
\begin{equation} \label{eq:theta}
\beta_{jk} = \frac{1}{2} + \frac{\arcsin(\rho_{jk})}{\pi}.
\end{equation}
The equation \eqref{eq:theta} can be rewritten as
\begin{equation} \label{eq:rho}
\rho_{jk} = \sin(\pi(\beta_{jk} - \frac{1}{2})) = -\cos(\pi\beta_{jk}).
\end{equation}
Thus, by proposing an estimator for $\beta_{jk}$, using \eqref{eq:rho} we can obtain an estimator for $\rho_{jk}$. The following estimator for $\beta_{jk}$ is optimal in the sense that it is unbiased and has minimum variance (UMVE) \cite{el2017rate},
\begin{equation}\label{eq:theta estimator}
\hat{\beta}_{jk} = \frac{1}{n} \sum_{i=1}^{n} \mathbb{I}(\hat x^{(i)}_j \hat x^{(i)}_k = 1),
\end{equation} 
where $\mathbb{I}(.)$ is the indicator function. $\hat\beta_{jk}$ is indeed a binomial random variable with success probability of $\beta_{jk}$. By substituting  $\hat\beta_{jk}$ in \eqref{eq:rho}, we use
\begin{equation}\label{eq:rho hat}
\hat \rho_{jk} = -\cos(\pi\hat\beta_{jk}),
\end{equation}
as an estimator for $\rho_{jk}$. Although $\hat\rho_{jk}$ is indeed biased, it is a consistent estimator for $\rho_{jk}$. Following lemma gives an error bound on this estimator.

\begin{lemma} \label{lemma:binary error bound}
	Let $x_j$ and $x_k$ be two jointly normal variables with zero means, unit variances and correlation coefficient $\rho_{jk}$. Then, for the estimator \eqref{eq:rho hat}, we have
	\begin{equation} \label{eq:binary error bound}
    \Pr\left(\left\vert\hat\rho_{jk} - \rho_{jk}\right\vert \geq \delta\right) \leq 2 \exp\left(-\frac{2}{\pi^2}n\delta^2\right),
	\end{equation}
	where $\delta \geq 0$.
\end{lemma}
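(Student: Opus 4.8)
The plan is to reduce the problem to a standard concentration inequality for the binomial estimator $\hat\beta_{jk}$ by exploiting that the map $\beta \mapsto -\cos(\pi\beta)$ is Lipschitz. First I would observe that $\hat\rho_{jk} = -\cos(\pi\hat\beta_{jk})$ and $\rho_{jk} = -\cos(\pi\beta_{jk})$ are both images of the same function $g(t) = -\cos(\pi t)$, applied to $\hat\beta_{jk}$ and $\beta_{jk}$ respectively. Since $g'(t) = \pi\sin(\pi t)$ satisfies $|g'(t)| \leq \pi$ for all $t$, the mean value theorem yields the Lipschitz bound
\begin{equation}
\left\vert \hat\rho_{jk} - \rho_{jk} \right\vert = \left\vert g(\hat\beta_{jk}) - g(\beta_{jk}) \right\vert \leq \pi \left\vert \hat\beta_{jk} - \beta_{jk} \right\vert.
\end{equation}

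Next I would use this bound to transfer the deviation event on $\hat\rho_{jk}$ into one on $\hat\beta_{jk}$. Whenever $|\hat\rho_{jk} - \rho_{jk}| \geq \delta$ holds, the Lipschitz inequality forces $|\hat\beta_{jk} - \beta_{jk}| \geq \delta/\pi$, so the first event is contained in the second and hence
\begin{equation}
\Pr\left(\left\vert \hat\rho_{jk} - \rho_{jk}\right\vert \geq \delta\right) \leq \Pr\left(\left\vert \hat\beta_{jk} - \beta_{jk}\right\vert \geq \frac{\delta}{\pi}\right).
\end{equation}

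Finally I would apply Hoeffding's inequality to $\hat\beta_{jk}$. As noted in the text, $\hat\beta_{jk} = \frac{1}{n}\sum_{i=1}^n \mathbb{I}(\hat x_j^{(i)}\hat x_k^{(i)} = 1)$ is an average of $n$ i.i.d. Bernoulli indicators, each taking values in $[0,1]$ with mean $\beta_{jk}$. Hoeffding's bound for bounded i.i.d. variables therefore gives $\Pr(|\hat\beta_{jk} - \beta_{jk}| \geq \epsilon) \leq 2\exp(-2n\epsilon^2)$; substituting $\epsilon = \delta/\pi$ produces the factor $\exp(-2n\delta^2/\pi^2)$ and completes the proof.

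I expect no serious obstacle here, as each step is routine; the only point requiring a little care is confirming the global Lipschitz constant $\pi$, which follows immediately from the uniform bound $|\sin| \leq 1$ rather than from any local behavior of the $\arcsin$ relation in \eqref{eq:theta}. The key conceptual move is recognizing that composing with the smooth, globally Lipschitz function $g$ preserves concentration at the cost of only a constant factor in the exponent, so the sharp exponential decay of the binomial estimator carries over directly to $\hat\rho_{jk}$.
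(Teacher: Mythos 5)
Your proof is correct and follows essentially the same route as the paper: both reduce the deviation of $\hat\rho_{jk}$ to that of $\hat\beta_{jk}$ via the $\pi$-Lipschitz property of $t \mapsto -\cos(\pi t)$ (the paper phrases this as $1$-Lipschitzness of $\cos$ applied to the points $\pi\hat\beta_{jk}$ and $\pi\beta_{jk}$, which is the same bound), and then both apply Hoeffding's inequality to the Bernoulli average $\hat\beta_{jk}$ with threshold $\delta/\pi$.
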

\begin{proof}
	Since the function $\cos(\cdot)$ is a 1-Lipschitz function, i.e. $\vert \cos(x) - \cos(y) \vert \leq \vert x - y \vert$, we have
	\begin{align*}
	\Pr\left(\vert\hat\rho_{jk} - \rho_{jk} \vert \geq \delta\right) &= \Pr\left(\vert -\cos(\pi\hat\beta_{jk}) + \cos(\pi\beta_{jk}) \vert \geq \delta\right) \\
	&\leq \Pr\left(\pi\vert \hat\beta_{jk} - \beta_{jk} \vert \geq \delta\right) \\
	&= \Pr\left(\vert \hat\beta_{jk} - \beta_{jk} \vert \geq \frac{\delta}{\pi}\right).
	\end{align*}
	Since $\hat\beta_{jk}$ is sum of $n$ independent Bernoulli random variables, applying the Hoeffding inequality yields
	\begin{align*}
    \Pr\left(\vert \hat\beta_{jk} - \beta_{jk} \vert \geq \frac{\delta}{\pi}\right) &\leq 2\exp\left(-\frac{2}{\pi^2}n\delta^2\right),
	\end{align*}
	which completes the proof.
\end{proof}
Lemma \ref{lemma:binary error bound} shows that the error of proposed estimator $\hat \rho_{jk}$ is controlled by the number of samples exponentially. Using this estimator, we can obtain an estimator for the sample covariance matrix as follows
\begin{equation} \label{eq:sample cov matrix binary}
\widehat{S}_x = - \cos(\frac{\pi}{2} B),
\end{equation} 
where $B_{jk} = \hat\beta_{jk}$, and $\cos(.)$ function is applied on the input matrix element wise, i.e. $(\widehat{S}_x)_{ij} = -\cos(\dfrac{\pi}{2} \beta_{ij})$.
By substituting the above sample covariance matrix into \eqref{eq:ML problem}, we can solve the regularized maximum likelihood problem. 

Note that the sample covariance matrix $\widehat{S}_x$ defined in \eqref{eq:sample cov matrix binary} is not necessarily positive semi-definite. But, it does not affect the convexity and uniqueness solution of \eqref{eq:ML problem}. Ravikumar et al. \cite{ravikumar2011high} proved that the problem \eqref{eq:ML problem} is convex and has a unique solution for any sample covariance matrix with strictly positive diagonal elements which holds for $\widehat{S}_x$ in \eqref{eq:sample cov matrix binary}.


By incorporating Lemma \ref{lemma:binary error bound} and the theorems 1 and 2 in \cite{ravikumar2011high}, we can conclude that the precision matrix obtained by solving \eqref{eq:ML problem} with the sample covariance matrix $\widehat{S}_x$ in \eqref{eq:S_x tilde}, recovers the true structure with high probability. Moreover, the proposed method correctly recovers the signs of the edges with high probability.

More precisely, the event $\mathcal{M}(\Theta_x; \widehat{\Theta}_x)$ indicates that $\Theta_x$ and $\widehat{\Theta}_x$ do agree on the zero entries and for the nonzero entries they have the same sign.   
Theorems \ref{thm:binary main theorem} and \ref{thm:uncoded} state that the event $\mathcal{M}(\Theta_x, \widehat{\Theta}_x)$ occurs with high probability. Before stating the theorems we should define some properties of the underlying GGM. We denote the maximum degree of the underlying graph structure by $\Delta$. The minimum absolute value of the edges weighs in the precision matrix is denoted by $\theta_{\mathrm{min}}$ which is
\begin{equation}
    \theta_{\mathrm{min}} = \min_{(i,j) \in E(\Theta_x)} \vert\left(\Theta_x\right)_{ij}\vert.
\end{equation}


\begin{theorem} \label{thm:binary main theorem}
Consider a normal distribution satisfying the incoherence Assumption \ref{assump: incoherence} and \ref{assump:covariance control} with parameter $\alpha \in (0, 1]$. Let $\widehat{\Theta}_x$ be the  solution of the log-determinant program \eqref{eq:ML problem} with sample covariance $\widehat{S}_x$ in \eqref{eq:sample cov matrix binary} and regularization parameter $\lambda_n = (8\pi/\alpha) \sqrt{\frac{1}{2n}\ln\frac{2}{\epsilon}}$ for some $0 < \epsilon \leq d^{-2}$. Then,
\begin{enumerate}[label=(\alph*)]
\item
If the sample size is lower bounded as
\begin{equation}
n > C_{\mathrm{sign}}^2~\Delta^2~\left(1+\frac{8}{\alpha}\right)^2 \ln\frac{2}{\epsilon},
\end{equation}
where
\begin{equation*}
C_{\mathrm{sign}} = 3\sqrt{2} \pi \max\{\kappa_{\Sigma}\kappa_{\Gamma},\kappa_{\Sigma}^3\kappa_{\Gamma}^2\},   
\end{equation*}
then with probability at least $1-d^2 \epsilon$, the edge set specified by $\widehat{\Theta}_x$ is a subset of the true edge set.

\item
If the sample size satisfies the lower bound
 \begin{equation}
        n > T^2_{\mathrm{sign}} \left(1+\frac{8}{\alpha}\right)^2 \ln\frac{2}{\epsilon},
    \end{equation}
    where 
    \begin{equation*}
        T_{\mathrm{sign}} = \sqrt{2}\pi \max\{\kappa_{\Gamma}\theta_{\mathrm{min}}^{-1}, 3 \Delta~\max\{\kappa_{\Sigma}\kappa_{\Gamma},\kappa_{\Sigma}^3\kappa_{\Gamma}^2\}\},
    \end{equation*}
	then,
	\begin{equation}
	\Pr\left(\mathcal{M}(\widehat{\Theta}_x; \Theta_x)\right) \geq 1-d^2\epsilon.
	\end{equation} 
\end{enumerate}
\end{theorem}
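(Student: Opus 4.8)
The plan is to reduce both statements to the deterministic primal--dual analysis of Ravikumar et al.\ \cite{ravikumar2011high} (their Theorems~1 and~2), feeding in the concentration guarantee of Lemma~\ref{lemma:binary error bound} as the required control on the element-wise sampling error. Those theorems are, at heart, deterministic: conditioned on the event that the maximum element-wise deviation $\max_{j,k}\vert (\widehat{S}_x)_{jk} - (Q_x)_{jk}\vert \le \delta$ holds and that the regularization parameter obeys $\lambda_n \ge (8/\alpha)\delta$, they guarantee (i) no false edges, i.e.\ $S(\widehat{\Theta}_x)\subseteq S(\Theta_x)$, provided $\delta$ is small relative to $\Delta$ and the covariance-control constants, and (ii) the element-wise bound $\max_{j,k}\vert(\widehat{\Theta}_x)_{jk} - (\Theta_x)_{jk}\vert \le 2\kappa_{\Gamma}(1+8/\alpha)\delta$, which upgrades to signed-support recovery once this bound falls below $\theta_{\mathrm{min}}$. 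So the entire argument consists of supplying the right $\delta$ and checking that the stated sample-size thresholds are exactly what makes the two smallness conditions hold. The admissibility of the (non-PSD) input $\widehat{S}_x$ to this deterministic machinery is already secured by the strictly-positive-diagonal remark preceding the theorem.

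First I would fix $\delta = \pi\sqrt{\tfrac{1}{2n}\ln\tfrac{2}{\epsilon}}$, chosen so that $\lambda_n = (8/\alpha)\delta$ reproduces the prescribed $\lambda_n$ and simultaneously so that the per-entry tail from Lemma~\ref{lemma:binary error bound} collapses to $2\exp(-\tfrac{2}{\pi^2}n\delta^2) = \epsilon$. The diagonal entries of $\widehat{S}_x$ are estimated exactly at the true unit variances, so only the off-diagonals carry sampling error; a union bound over at most $d^2$ entries then gives $\Pr(\max_{j,k}\vert(\widehat{S}_x)_{jk}-(Q_x)_{jk}\vert\ge\delta)\le d^2\epsilon$. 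This already pins the success probability at $1-d^2\epsilon$, matching the claim, and explains the hypothesis $\epsilon\le d^{-2}$, which keeps $d^2\epsilon\le 1$ meaningful.

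For part~(a), the Ravikumar smallness requirement reads $\delta \le \bigl(6(1+8/\alpha)\Delta\max\{\kappa_{\Sigma}\kappa_{\Gamma},\kappa_{\Sigma}^3\kappa_{\Gamma}^2\}\bigr)^{-1}$; substituting $\delta$ and squaring turns this into $n > 18\pi^2(\max\{\kappa_{\Sigma}\kappa_{\Gamma},\kappa_{\Sigma}^3\kappa_{\Gamma}^2\})^2\Delta^2(1+8/\alpha)^2\ln\tfrac{2}{\epsilon}$, which is precisely $n > C_{\mathrm{sign}}^2\Delta^2(1+8/\alpha)^2\ln\tfrac{2}{\epsilon}$ since $C_{\mathrm{sign}}^2 = 18\pi^2(\max\{\kappa_{\Sigma}\kappa_{\Gamma},\kappa_{\Sigma}^3\kappa_{\Gamma}^2\})^2$. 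For part~(b) I would additionally demand $2\kappa_{\Gamma}(1+8/\alpha)\delta < \theta_{\mathrm{min}}$, so that the $\ell_\infty$ error cannot zero out or flip the sign of any true edge; this rearranges to $n > 2\pi^2\kappa_{\Gamma}^2\theta_{\mathrm{min}}^{-2}(1+8/\alpha)^2\ln\tfrac{2}{\epsilon}$. Taking the larger of the two thresholds and reading off $T_{\mathrm{sign}}^2 = 2\pi^2\max\{\kappa_{\Gamma}^2\theta_{\mathrm{min}}^{-2},\,9\Delta^2(\max\{\kappa_{\Sigma}\kappa_{\Gamma},\kappa_{\Sigma}^3\kappa_{\Gamma}^2\})^2\}$ recovers the stated bound and yields $\mathcal{M}(\widehat{\Theta}_x;\Theta_x)$ on the good event.

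The main obstacle is bookkeeping rather than conceptual: I must reproduce Ravikumar et al.'s constants faithfully --- specifically the factor $6$ in the deviation-smallness condition and the factor $2\kappa_{\Gamma}$ in the element-wise error bound --- since any slip propagates directly into $C_{\mathrm{sign}}$ and $T_{\mathrm{sign}}$. The only place the sign-based estimator genuinely enters is through the constant $\pi$ supplied by Lemma~\ref{lemma:binary error bound} (replacing the sub-Gaussian deviation constant of the original paper); once that is inserted, the two sample-size inequalities follow from the squaring computations above.
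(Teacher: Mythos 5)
Your proposal is correct and follows exactly the route the paper intends: the paper gives no explicit proof of Theorem~\ref{thm:binary main theorem}, stating only that it follows ``by incorporating Lemma~\ref{lemma:binary error bound} and the theorems 1 and 2 in \cite{ravikumar2011high},'' which is precisely your reduction --- choose $\delta = \pi\sqrt{\tfrac{1}{2n}\ln\tfrac{2}{\epsilon}}$ so that $\lambda_n = (8/\alpha)\delta$ and the per-entry tail equals $\epsilon$, union bound over $d^2$ entries, and check that the stated sample-size thresholds are the squared forms of Ravikumar et al.'s two smallness conditions. Your constant bookkeeping ($C_{\mathrm{sign}}^2 = 18\pi^2 M^2$ and $T_{\mathrm{sign}}^2 = 2\pi^2\max\{\kappa_{\Gamma}^2\theta_{\mathrm{min}}^{-2}, 9\Delta^2 M^2\}$) reproduces the theorem's constants exactly, so the proposal supplies the details the paper leaves implicit.
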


\begin{remark}
	Note that the proposed sign method is applicable for any channel with capacity greater than or equal to 1 bit.
\end{remark}

\section{Uncoded Method} \label{sec:uncoded method}
In this section, we assume that each local machine puts its local data into the channel without any source or channel coding. The central machine estimates the underlying graph structure using received data from the channel. At the central machine, no source or channel decoding is used. It infers the structure directly from the output samples of the channel.

As described in Section \ref{sec:channel model}, we consider a SIMO-GMAC. Each local machine can transmit two consequent samples by each channel use. More precisely, denoting the channel input symbol by $\vect s = \vect s_R + j \vect s_I$, each local machine can put two consequent samples as real and imaginary parts of the input symbol. Therefore, at the central machine, $n/2$ vectors are received that each one is $2d$-dimensional.

In this way, the  equation \eqref{eq:y=Hx+z} can be decomposed as
\begin{align}
\vect y = (H_R + j H_I) (\vect s_R + j \vect s_I) + (\vect z_R + j \vect z_I),
\end{align}
where $j^2 = -1$. Hence, it can be rewritten in a block-matrix form as
\begin{equation} \label{eq:y=Hx+z matrix}
\begin{bmatrix}
\vect y_R \\
\vect y_I
\end{bmatrix}
= 
\begin{bmatrix}
H_R & -H_I \\
H_I & H_R
\end{bmatrix}
\begin{bmatrix}
\vect s_R \\
\vect s_I
\end{bmatrix}
+
\begin{bmatrix}
\vect z_R \\
\vect z_I
\end{bmatrix},
\end{equation}
where $H_R, H_I$ are $d \times d$ real matrices, and all the real and imaginary part vectors are $d$-dimensional. In this way, two source samples can be transmitted per channel use: a sample is put into the real part and the other is put into the imaginary part. In particular, $\vect s_R + j\vect s_I = \sqrt{\frac{p}{2}}(\vect x_R+j\vect x_I)$, where $\vect x_R$ and $\vect x_I$ are two independent samples from the source. In this way, the transmit power constraints are satisfied.  

The central machine estimates the conditional dependencies of the vectors $\vect x$ using the received vectors $\vect y$.

\subsection{Approximating the Sample Covariance}
Defining matrix $\widetilde{H} = \sqrt{\dfrac{p}{2}} 
\begin{bmatrix}
H_R & -H_I \\
H_I & H_R
\end{bmatrix}$, $\tilde{\vect x} = \begin{bmatrix}
\vect x_R \\
\vect x_I
\end{bmatrix}$, and $\tilde{\vect y} = \begin{bmatrix}
\vect y_R \\
\vect y_I
\end{bmatrix}$. When transmitting samples through the channel, if we put two independent samples as real and imaginary parts of the vector $\tilde{\vect x}$, then the covariance matrix of $\tilde{\vect x}$ is
\begin{equation}
Q_{\tilde x} = 
\begin{bmatrix}
Q_x & 0 \\
0	& Q_x
\end{bmatrix}.
\end{equation}
On the other hand, according to equation \eqref{eq:y=Hx+z matrix}, we have
\begin{equation}
Q_{\tilde x} = \widetilde{H}^{-1} Q_{\tilde y} \widetilde{H}^{-T} - \sigma_z^2 \widetilde{H}^{-1} \widetilde{H}^{-T},
\end{equation}
where $Q_{\tilde x}$ and $Q_{\tilde y}$ is the covariance matrix of $\tilde{\vect x}$ and $\tilde{\vect y}$, respectively. By substituting the sample covariance matrix of $\tilde{\vect y}$ into the above expression, we obtain an approximation for the sample covariance matrix of $\tilde{\vect x}$, as
\begin{equation} \label{eq:S_x tilde}
S_{\tilde x} = \widetilde{H}^{-1} S_{\tilde y} \widetilde{H}^{-T} - \sigma_z^2 \widetilde{H}^{-1} \widetilde{H}^{-T}.
\end{equation}
\begin{lemma} \label{lemma:sample cov err tilde}
Given $n$ i.i.d. samples of the vector $\tilde{\vect y}$. Then, for the sample covariance matrix $S_{\tilde x}$ in \eqref{eq:S_x tilde} we have
\begin{equation}
\Pr\left(\vert (S_{\tilde x})_{jk} - (Q_{\tilde x})_{jk} \vert \geq \delta \right) \leq 4\exp\left(\frac{-n\delta^2}{c}\right),
\end{equation}
where
\begin{equation}
    c = 3200 \left(1 + \sigma_z^2 / \lambda^2_{\mathrm{min}}(\widetilde{H})\right)^2,
\end{equation}
and $\lambda_{\mathrm{min}}(\widetilde{H})$ is the minimum eigenvalue of $\widetilde{H}$.
\end{lemma}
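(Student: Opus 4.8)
The plan is to recast the left-hand side as the fluctuation of a sample second moment of a pair of jointly Gaussian variables, and then invoke the Gaussian tail bound of Ravikumar et al.\ \cite{ravikumar2011high}, which is precisely where the prefactor $4$ and the constant $3200$ originate. First I would subtract the population identity $Q_{\tilde x} = \widetilde{H}^{-1} Q_{\tilde y}\widetilde{H}^{-T} - \sigma_z^2 \widetilde{H}^{-1}\widetilde{H}^{-T}$ from its empirical counterpart \eqref{eq:S_x tilde}. The key point is that the deterministic correction term $\sigma_z^2\widetilde{H}^{-1}\widetilde{H}^{-T}$ is identical in both expressions and therefore cancels, leaving the clean identity
\[
S_{\tilde x} - Q_{\tilde x} = \widetilde{H}^{-1}\left(S_{\tilde y} - Q_{\tilde y}\right)\widetilde{H}^{-T}.
\]
Writing $\vect a_j^T$ for the $j$-th row of $\widetilde{H}^{-1}$, the entry of interest is then $(S_{\tilde x}-Q_{\tilde x})_{jk} = \vect a_j^T(S_{\tilde y}-Q_{\tilde y})\vect a_k$.

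Next I would introduce the auxiliary variables $w_j := \vect a_j^T \tilde{\vect y}$. Using the block channel model \eqref{eq:y=Hx+z matrix}, i.e.\ $\tilde{\vect y}=\widetilde{H}\tilde{\vect x}+\tilde{\vect z}$, together with the defining relation $\vect a_j^T\widetilde{H}=\vect e_j^T$, these collapse to $w_j = \tilde x_j + \vect a_j^T\tilde{\vect z}$, so each $w_j$ is a zero-mean Gaussian and the pair $(w_j,w_k)$ is jointly Gaussian (a linear image of $(\tilde{\vect x},\tilde{\vect z})$). A short computation then shows that $(S_{\tilde x})_{jk}-(Q_{\tilde x})_{jk} = \frac1n\sum_{i=1}^n w_j^{(i)}w_k^{(i)} - \mathbb{E}[w_j w_k]$: the $\sigma_z^2\,\vect a_j^T\vect a_k$ terms that appear in both the sample and population second moments of $w_jw_k$ cancel, reproducing the same cancellation observed at the matrix level. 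Thus the quantity I must control is exactly the deviation of a Gaussian sample covariance entry from its mean.

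It remains to bound the variances feeding into the tail inequality. Since $\tilde{\vect x}$ and $\tilde{\vect z}$ are independent with $\mathbb{E}[\tilde{\vect z}\tilde{\vect z}^T]=\sigma_z^2 I$ and $(Q_{\tilde x})_{jj}=1$, I get $\mathbb{E}[w_j^2] = (Q_{\tilde x})_{jj} + \sigma_z^2\|\vect a_j\|^2 \le 1 + \sigma_z^2/\lambda_{\mathrm{min}}^2(\widetilde{H})$, using that each row norm $\|\vect a_j\|$ is dominated by the operator norm $\|\widetilde{H}^{-1}\|=1/\lambda_{\mathrm{min}}(\widetilde{H})$. Setting $\sigma^2 := 1+\sigma_z^2/\lambda_{\mathrm{min}}^2(\widetilde{H})$, the Gaussian concentration lemma of \cite{ravikumar2011high} applied to the pair $(w_j,w_k)$ yields the bound $4\exp(-n\delta^2/(3200\,\sigma^4))$, which is exactly the claimed statement with $c=3200\,\sigma^4$. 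The main obstacle is the sub-exponential (Bernstein-type) concentration for products of dependent Gaussians with the sharp constant $3200$; but since this is precisely the cited lemma, the real work is confined to the algebraic recasting above, and the only delicate points are verifying the exact cancellation of the noise-correction terms and the row-norm estimate $\|\vect a_j\|\le 1/\lambda_{\mathrm{min}}(\widetilde{H})$ (reading $\lambda_{\mathrm{min}}$ as the smallest singular value of $\widetilde{H}$).
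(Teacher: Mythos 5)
Your proposal is correct and follows essentially the same route as the paper: both define the auxiliary Gaussian vector $\vect w = \widetilde{H}^{-1}\tilde{\vect y}$, observe that $S_{\tilde x}-Q_{\tilde x}$ equals the deviation of the sample covariance of $\vect w$ from $Q_w$ (the noise-correction terms cancelling), bound $\max_i (Q_w)_{ii}$ by $1+\sigma_z^2/\lambda_{\mathrm{min}}^2(\widetilde{H})$, and invoke Lemma~1 of \cite{ravikumar2011high} to obtain the constant $c = 3200\bigl(1+\sigma_z^2/\lambda_{\mathrm{min}}^2(\widetilde{H})\bigr)^2$. The only cosmetic difference is that you bound the row norms $\Vert \vect a_j \Vert$ by the operator norm of $\widetilde{H}^{-1}$, while the paper reaches the same diagonal bound via the eigendecomposition $\widetilde{H}^{-1}\widetilde{H}^{-T} = U\Lambda^2 U^T$.
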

\begin{proof}
	We define the random variable $\vect w$ as follows
	\begin{equation*}
	\vect w = \widetilde H^{-1} \vect y = \vect x + \widetilde{H}^{-1} \vect z.
	\end{equation*}
	It is clear that $\vect w \sim \mathcal{N}(\vect 0, Q_w)$, where $Q_w=Q_x + \sigma_z^2 \widetilde{H}^{-1}\widetilde{H}^{-T}$. Denoting $\vect w = [w_1, \cdots, w_p]^T$, $w_j/\sqrt{(Q_w)_{jj}}$ is a standard normal variable which is sub-Gaussian with parameter $1$. Thus, according to Lemma 1 in \cite{ravikumar2011high}, we have
	\begin{align*}
	\Pr\left(\vert (S_w)_{jk} - (Q_w)_{jk} \vert \geq \delta \right) \leq \exp\left\{-\frac{n\delta^2}{3200 \max_i (Q_w)_{ii}^2}\right\},
	\end{align*}
	where $S_w = \widetilde{H}^{-1}S_y \widetilde{H}^{-T}$ is the sample covariance over $\vect w^{(i)} = \widetilde{H}^{-1} \vect y^{(i)}$ samples. On the other hand, we have 
	\begin{align}
	\Pr\left(\vert \left(S_{\tilde x} - Q_{\tilde x}\right)_{jk} \vert \geq \delta \right) &= \Pr\left(\left\vert \left( \widetilde{H}^{-1}\left(S_{\tilde y} - Q_{\tilde y}\right)\widetilde{H}^{-T}\right)_{jk} \right\vert \geq \delta \right) \nonumber \\
	&=\Pr\left(\vert \left(S_w - Q_w\right)_{jk} \vert \geq \delta \right) \nonumber \\
	&\leq\exp\left\{-\frac{n\delta^2}{3200 \max_i (Q_w)_{ii}^2}\right\}  \label{eq:S_w bound}
	\end{align}
	By finding an upper bound on  $\max_i (Q_w)_{ii}^2$, we can obtain an upper bound on the above probability. 
	\begin{align}
	\max_i{(Q_{w})_{ii}} &= \max_i{(Q_x + \sigma_{z}^2\widetilde{H}^{-1}\widetilde{H}^{-T})_{ii}} \nonumber \\
	&\leq \max_i{(\left|Q_{x}\right|+\left|\sigma_{z}^2\widetilde{H}^{-1}\widetilde{H}^{-T}\right|)_{ii}} \nonumber \\
	&\leq \max_i{(Q_{x})_{ii}}+\max_j{(\sigma_{z}^2\widetilde{H}^{-1}\widetilde{H}^{-T})_{jj}}. \label{eq:max(Q_ii) upper bound}
	\end{align}
	By decomposing $\widetilde{H}^{-1}\widetilde{H}^{-T}$ as $U\Lambda^2 U^T$, we have
	\begin{align*}
	\max_i{(\sigma_{z}^2\widetilde{H}^{-1}\widetilde{H}^{-T})_{ii}} &= \max_i (\sigma_{z}^2 U\Lambda^2 U^T)_{ii} \\
	&\leq \max_i (\sigma_z^2 \lambda_{\mathrm{max}}^2(\widetilde{H}^{-1}) U U^T)_{ii} \\
	&=  \frac{\sigma_{z}^2}{\lambda_{\mathrm{min}}^2(\widetilde{H})}.
	\end{align*}
	By combining the above bound and \eqref{eq:max(Q_ii) upper bound}, and substituting into \eqref{eq:S_w bound} we obtain the claimed bound in the lemma.
\end{proof}

Next, we define the approximate sample covariance matrix of $\vect x$ as
\begin{equation} \label{eq:S_x joint}
\hat S_x = \frac{1}{2} \left(\begin{bmatrix}
I_d & 0_d
\end{bmatrix}
S_{\tilde x} 
\begin{bmatrix}
I_d \\
0_d
\end{bmatrix}
+
\begin{bmatrix}
0_d & I_d
\end{bmatrix}
S_{\tilde x}
\begin{bmatrix}
0_d \\
I_d
\end{bmatrix}\right),
\end{equation}
where $0_d$ is a $d\times d$ zero matrix.
\begin{lemma}
	For $\hat S_x$ defined in \eqref{eq:S_x joint}, we have
	\begin{equation}
		\Pr\left(\vert (S_x)_{jk} - (Q_x)_{jk} \vert \geq \delta \right) \leq 8\exp\left(\frac{-n\delta^2}{2c}\right).
	\end{equation} 
\end{lemma}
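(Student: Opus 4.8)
The plan is to reduce the claim to Lemma~\ref{lemma:sample cov err tilde} by unpacking the block structure of $\hat S_x$. First I would observe that the two matrix products in \eqref{eq:S_x joint} merely extract the top-left and bottom-right $d\times d$ blocks of $S_{\tilde x}$, so that $(\hat S_x)_{jk} = \tfrac12\bigl((S_{\tilde x})_{jk} + (S_{\tilde x})_{j+d,\,k+d}\bigr)$. Since $Q_{\tilde x}$ is block diagonal with both diagonal blocks equal to $Q_x$, the matching entries of $Q_{\tilde x}$ are each $(Q_x)_{jk}$, which yields the clean decomposition
\begin{equation*}
(\hat S_x)_{jk} - (Q_x)_{jk} = \tfrac12\,(X_1 + X_2),
\end{equation*}
where $X_1 := (S_{\tilde x})_{jk} - (Q_{\tilde x})_{jk}$ and $X_2 := (S_{\tilde x})_{j+d,\,k+d} - (Q_{\tilde x})_{j+d,\,k+d}$ are two of the entrywise deviations already controlled by Lemma~\ref{lemma:sample cov err tilde}.

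The second step is a triangle-inequality and union-bound argument. Because $\bigl|\tfrac12(X_1+X_2)\bigr|\ge\delta$ forces $|X_1+X_2|\ge 2\delta$, and hence at least one of $|X_1|\ge\delta$ or $|X_2|\ge\delta$, the event of interest is contained in $\{|X_1|\ge\delta\}\cup\{|X_2|\ge\delta\}$. A union bound then gives
\begin{equation*}
\Pr\bigl(|(\hat S_x)_{jk}-(Q_x)_{jk}|\ge\delta\bigr) \le \Pr(|X_1|\ge\delta) + \Pr(|X_2|\ge\delta),
\end{equation*}
and each summand is bounded by a direct appeal to Lemma~\ref{lemma:sample cov err tilde}.

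The only genuinely non-routine point—and the origin of the factor $2$ in the exponent's denominator—is the effective sample size. In the uncoded scheme each complex channel use carries two independent source samples (one in the real part, one in the imaginary part), so the $n$ source samples yield only $n/2$ i.i.d.\ observations of $\tilde{\vect y}$, and it is these $n/2$ observations from which $S_{\tilde x}$ is built. Substituting the sample count $n/2$ into Lemma~\ref{lemma:sample cov err tilde} therefore gives $\Pr(|X_i|\ge\delta)\le 4\exp\bigl(-n\delta^2/(2c)\bigr)$ for $i\in\{1,2\}$, and summing the two identical bounds produces the claimed $8\exp\bigl(-n\delta^2/(2c)\bigr)$. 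I expect the careful bookkeeping of this halved sample size—rather than the mechanical block extraction and union bound—to be the only step that genuinely requires attention.
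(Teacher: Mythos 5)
Your proposal is correct and follows essentially the same route as the paper's proof: the same block-extraction identity $(\hat S_x)_{jk} = \tfrac12\bigl((S_{\tilde x})_{jk} + (S_{\tilde x})_{(d+j)(d+k)}\bigr)$, the same triangle-inequality/union-bound step, and the same application of Lemma~\ref{lemma:sample cov err tilde} with sample size $n/2$. You also correctly pinpoint the halved sample size (two source samples per channel use) as the source of the factor $2$ in the exponent and the constant $8 = 2\times 4$, which is exactly the bookkeeping the paper relies on.
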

\begin{proof}
From \eqref{eq:S_x joint}, it is clear that $$(\hat S_{x})_{jk} = \frac{1}{2}\left((S_{\tilde x})_{jk} + (S_{\tilde x})_{(d+j)(d+k)}\right).$$ Thus, we have
\begin{align*}
&\Pr\left(\vert (\hat S_x)_{jk} - (Q_x)_{jk} \vert \geq \delta \right) \\
&\quad= \Pr\left(\vert (S_{\tilde x})_{jk} + (S_{\tilde x})_{(d+j)(d+k)} - 2(Q_{\tilde x})_{jk} \vert \geq 2\delta \right) \\
&\quad\leq \Pr\left(\vert (S_{\tilde x})_{jk} - (Q_{\tilde x})_{jk}\vert + \vert (S_{\tilde x})_{(d+j)(d+k)} - (Q_{\tilde x})_{jk} \vert \geq 2\delta \right) \\
&\quad \leq \Pr\left(\vert (S_{\tilde x})_{jk} - (Q_{\tilde x})_{jk}\vert \geq \delta \right) + \\ 
&\quad\qquad \Pr\left(\vert (S_{\tilde x})_{(d+j)(d+k)} - (Q_{\tilde x})_{jk}\vert \geq \delta \right) \\
&\quad \leq 8\exp\bigg(\frac{-n\delta^2}{2c}\bigg),
\end{align*}
where the last inequality is obtained by the error bound of Lemma \ref{lemma:sample cov err tilde} for the sample size $n/2$.
\end{proof}
Note that the matrix $\widehat{S}_x$ in \eqref{eq:S_x joint} is not necessarily positive semi-definite. But this does not affect the convexity of the optimization problem \eqref{eq:ML problem}.

By substituting $\widehat{S}_x$ from \eqref{eq:S_x joint} into the \eqref{eq:ML problem}, we can solve the $\ell_1$ regularized maximum likelihood problem and obtain a sparse solution for the precision matrix $\Theta_x$. Similar to Theorem \ref{thm:binary main theorem}, we can guarantee that Uncoded method can correctly recover the underlying graph structure with high probability.

\begin{theorem} \label{thm:uncoded}
Consider a normal distribution satisfying the incoherence Assumption \ref{assump: incoherence} and \ref{assump:covariance control} with parameter $\alpha \in (0, 1]$. Let $\widehat{\Theta}_x$ be the  solution of the log-determinant program \eqref{eq:ML problem} with sample covariance $\widehat{S}_x$ in \eqref{eq:S_x joint} and regularization parameter $\lambda_n = (8\pi/\alpha) \sqrt{\frac{1}{2n}\ln\frac{2}{\epsilon}}$ for some $0 < \epsilon \leq d^{-2}$.
\begin{enumerate}[label=(\alph*)]
\item
If the sample size is lower bounded as
\begin{equation}
n > C_{\mathrm{uncoded}}^2~\Delta^2\left(1+\frac{8}{\alpha}\right)^2 \ln\frac{8}{\epsilon},
\end{equation}
where
\begin{equation*}
C_{\mathrm{uncoded}} = 6 \sqrt{2c}~ \max\{\kappa_{\Sigma}\kappa_{\Gamma},\kappa_{\Sigma}^3\kappa_{\Gamma}^2\}.    
\end{equation*}
then with probability at least $1-d^2 \epsilon$, the edge set specified by $\widehat{\Theta}_x$ is a subset of the true edge set.

\item
If the sample size satisfies the lower bound
 \begin{equation}
        n > T^2_{\mathrm{uncoded}} \left(1+\frac{8}{\alpha}\right)^2 \ln\frac{8}{\epsilon},
    \end{equation}
    where 
    \begin{equation*}
        T_{\mathrm{uncoded}} = 2\sqrt{2c} \max\{\kappa_{\Gamma}\theta_{\mathrm{min}}^{-1}, 3 \Delta~\max\{\kappa_{\Sigma}\kappa_{\Gamma},\kappa_{\Sigma}^3\kappa_{\Gamma}^2\}\}
    \end{equation*}
	then,
	\begin{equation}
	\Pr\left(\mathcal{M}(\widehat{\Theta}_x; \Theta_x)\right) \geq 1-d^2\epsilon.
	\end{equation} 
\end{enumerate}
\end{theorem}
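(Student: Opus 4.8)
The plan is to reduce the statement to Theorems~1 and~2 of Ravikumar et al.~\cite{ravikumar2011high}, exactly as was done for the Signs method in Theorem~\ref{thm:binary main theorem}; the only difference is that the per-entry concentration of the surrogate sample covariance is now governed by the bound established for $\widehat S_x$ immediately above, namely $\Pr(|(\widehat S_x)_{jk}-(Q_x)_{jk}|\ge\delta)\le 8\exp(-n\delta^2/(2c))$, rather than by Lemma~\ref{lemma:binary error bound}. Ravikumar's analysis enters the likelihood $g(\Theta)$ in \eqref{eq:ML problem} only through the sample covariance, so once $\|\widehat S_x-Q_x\|_\infty$ is controlled at the right scale, both of their conclusions — support containment and sign consistency — transfer directly. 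The strictly-positive-diagonal requirement that guarantees uniqueness of \eqref{eq:ML problem} holds on the same high-probability event, since each $(\widehat S_x)_{jj}$ concentrates around $(Q_x)_{jj}=1$.

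For part~(a) I would first invert the tail bound to find the deviation each entry respects with probability at least $1-\epsilon$: setting $8\exp(-n\delta^2/(2c))=\epsilon$ gives $\delta(n,\epsilon)=\sqrt{(2c/n)\ln(8/\epsilon)}$, and I would take the regularization parameter to be $\lambda_n=(8/\alpha)\,\delta(n,\epsilon)$, as Ravikumar's framework dictates. Their support-recovery theorem requires the deviation to be small relative to the structural constants, i.e. $\delta(n,\epsilon)\le \big[6\Delta(1+8/\alpha)\max\{\kappa_\Sigma\kappa_\Gamma,\kappa_\Sigma^3\kappa_\Gamma^2\}\big]^{-1}$. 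Squaring this inequality and solving for $n$ produces precisely the threshold $n> C_{\mathrm{uncoded}}^2\,\Delta^2(1+8/\alpha)^2\ln(8/\epsilon)$ with $C_{\mathrm{uncoded}}=6\sqrt{2c}\,\max\{\kappa_\Sigma\kappa_\Gamma,\kappa_\Sigma^3\kappa_\Gamma^2\}$, the factor $\sqrt{2c}$ and the term $\ln(8/\epsilon)$ being the only footprints of the new tail bound. A union bound over the $d^2$ entries — legitimate because $\epsilon\le d^{-2}$ keeps $d^2\epsilon\le 1$ — then yields the stated failure probability $d^2\epsilon$ and the conclusion that the estimated edge set is contained in the true one.

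For part~(b) I would additionally invoke Ravikumar's elementwise estimation bound $\|\widehat\Theta_x-\Theta_x\|_\infty\le 2\kappa_\Gamma(1+8/\alpha)\,\delta(n,\epsilon)$, and force it below $\theta_{\mathrm{min}}$ so that every true nonzero entry is estimated with nonvanishing magnitude and correct sign. This requires $\delta(n,\epsilon)<\theta_{\mathrm{min}}/[2\kappa_\Gamma(1+8/\alpha)]$, which upon squaring becomes the sample-size bound coming from the first branch of the maximum defining $T_{\mathrm{uncoded}}$, namely $n>8c\,\kappa_\Gamma^2\theta_{\mathrm{min}}^{-2}(1+8/\alpha)^2\ln(8/\epsilon)$. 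Since sign consistency also presupposes correct support, the requirement must dominate that of part~(a); this is exactly what the maximum of $\kappa_\Gamma\theta_{\mathrm{min}}^{-1}$ and $3\Delta\max\{\kappa_\Sigma\kappa_\Gamma,\kappa_\Sigma^3\kappa_\Gamma^2\}$ inside $T_{\mathrm{uncoded}}$ encodes, its second branch reproducing $C_{\mathrm{uncoded}}\Delta$. Intersecting the two events then delivers $\Pr(\mathcal{M}(\widehat\Theta_x;\Theta_x))\ge 1-d^2\epsilon$.

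Conceptually this is a transcription of the Signs proof, so I do not anticipate a substantive obstacle; the delicate part is purely the constant bookkeeping. I would be careful that the factor $8$ — inherited from the two diagonal blocks of $S_{\tilde x}$, the $\pm$ splitting, and the halving of the effective sample size in the preceding lemma — enters only as $\ln(8/\epsilon)$, while the entire channel and noise dependence funnels through $c=3200\,(1+\sigma_z^2/\lambda_{\min}^2(\widetilde H))^2$ into $C_{\mathrm{uncoded}}$ and $T_{\mathrm{uncoded}}$ via the single factor $\sqrt{2c}$. The one genuinely new verification is that the quoted $\lambda_n$ is consistent with $\lambda_n=(8/\alpha)\,\delta(n,\epsilon)$ for the \emph{uncoded} tail; matching those two scalings is what pins down the constants above.
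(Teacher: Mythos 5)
Your proposal is correct and follows essentially the same route as the paper, which in fact offers no explicit proof beyond asserting (just before the theorem) that one combines the concentration bound for $\widehat{S}_x$ in \eqref{eq:S_x joint} with Theorems 1 and 2 of Ravikumar et al.\ \cite{ravikumar2011high}, exactly as for the Signs method; your bookkeeping correctly recovers $C_{\mathrm{uncoded}}=6\sqrt{2c}\,\max\{\kappa_\Sigma\kappa_\Gamma,\kappa_\Sigma^3\kappa_\Gamma^2\}$, $T_{\mathrm{uncoded}}$, and the $\ln(8/\epsilon)$ factor. The one caveat you flagged is real: the Ravikumar-consistent regularization for the uncoded tail is $\lambda_n=(8/\alpha)\sqrt{(2c/n)\ln(8/\epsilon)}$, whereas the theorem statement quotes the Signs-method value $(8\pi/\alpha)\sqrt{\tfrac{1}{2n}\ln\tfrac{2}{\epsilon}}$ --- an inconsistency in the paper's statement itself, not a gap in your argument.
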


\begin{remark}
Since the  channel has real and imaginary parts, 2 samples can be transmitted by each channel access (i.e. each local machine can transmit $n$ samples by $n/2$ channel uses). Therefore, if the sample generation rate at the source is less than or equal to twice of the channel's rate, the machines can transmit all samples without any sample loss.
\end{remark}


\section{Experiments} \label{sec:experiment}
In this section, the performance of our proposed methods is evaluated by performing several experiments. In our simulations\footnote{The source code is available at \url{https://github.com/ArminKaramzade/distributed-sparse-GGM}.}, the \emph{glasso} package \cite{friedman2008sparse} is used to solve $\ell_1$-regularized MLE of the precision matrix. This package is based on the block coordinate descent algorithm proposed by \cite{banerjee2008model}.

In our experiments, a sparse random precision matrix is generated as follows. First, we generate a random sparse graph with a fixed probability of the edge presence, say $0.1$, and also set its maximum node degree to $\Delta = 5$. Then, we choose edge weights uniformly in $[-1, 1]$ for the symmetric precision matrix $\Theta$. Next, we make it positive definite matrix by adding a scaled identity matrix. Finally, we normalize the precision matrix to set the variances to $1$. Also, we ensure that the generated matrix satisfies Assumption \ref{assump: incoherence}.

We employ the True Positive and False Positive Rates (TPR and FPR, respectively) as our performance measures. TPR is defined as the percentage of the predicted edges (non-zero off-diagonal entries in the precision matrix) that correctly detected. Similarly, FPR is the percentage of the predicted non-edges (zero entries in the precision matrix) that incorrectly detected.

We have experimentally observed
that $\lambda_n^{\text{*signs}}\approx 4\lambda_n^{\text{*original}}$ and $\lambda_n^{\text{*uncoded}}\approx \frac{2}{3}\lambda_n^{\text{*original}}$, where $\lambda_n^{\text{*signs}}$, $\lambda_n^{\text{*uncoded}}$, and $\lambda_n^{\text{*original}}$ are the best regularization parameter for the signs, uncoded, and original data, respectively. 

In order to have a fair comparison between Signs and Uncoded methods, we have used identical parameters for the channel. More precisely, we assume $H=I$ which ensures all the local machines have identical bit rates. We set the bit rate of each local machine, i.e. $R_j$, to 2 bits. Thus, in the proposed methods, each local machine can transmit $n$ bits by $n/2$ channel uses. According to \eqref{eq:channel rates region}, in order to achieve the bit-rate of 2, the signal to noise ratio (SNR) should set to 3 (i.e. $\frac{p}{\sigma^2_z}=3$).

In  the first experiment, we evaluate the performance of our methods with respect to the dimension $d$. \figurename{}~\ref{fig:error-in-dim} shows TPR and FPR as a function of $d$ for sample sizes $n=1000$ and $10,000$. In this experiment, the error curves are averaged over 20 different random graphs and for each graph the TPR and FPR are averaged over 10 different random samples. As can be seen from \figurename{}~\ref{fig:error-in-dim}, Signs method outperforms Uncoded method. However, all three methods have approximately the same FPR. 

\figurename{}~\ref{fig:error-in-sam} reflects the performance of the methods as a function of the sample size $n$ for $d=50$ and $d=100$. As can be seen, by increasing the sample size $n$, the performance of all methods increases. In this experiment again Signs method outperforms the uncoded scheme.

\begin{figure*}[t]
  \centering
  \subfloat[$n=1000$]{\includegraphics[width=.24\linewidth]{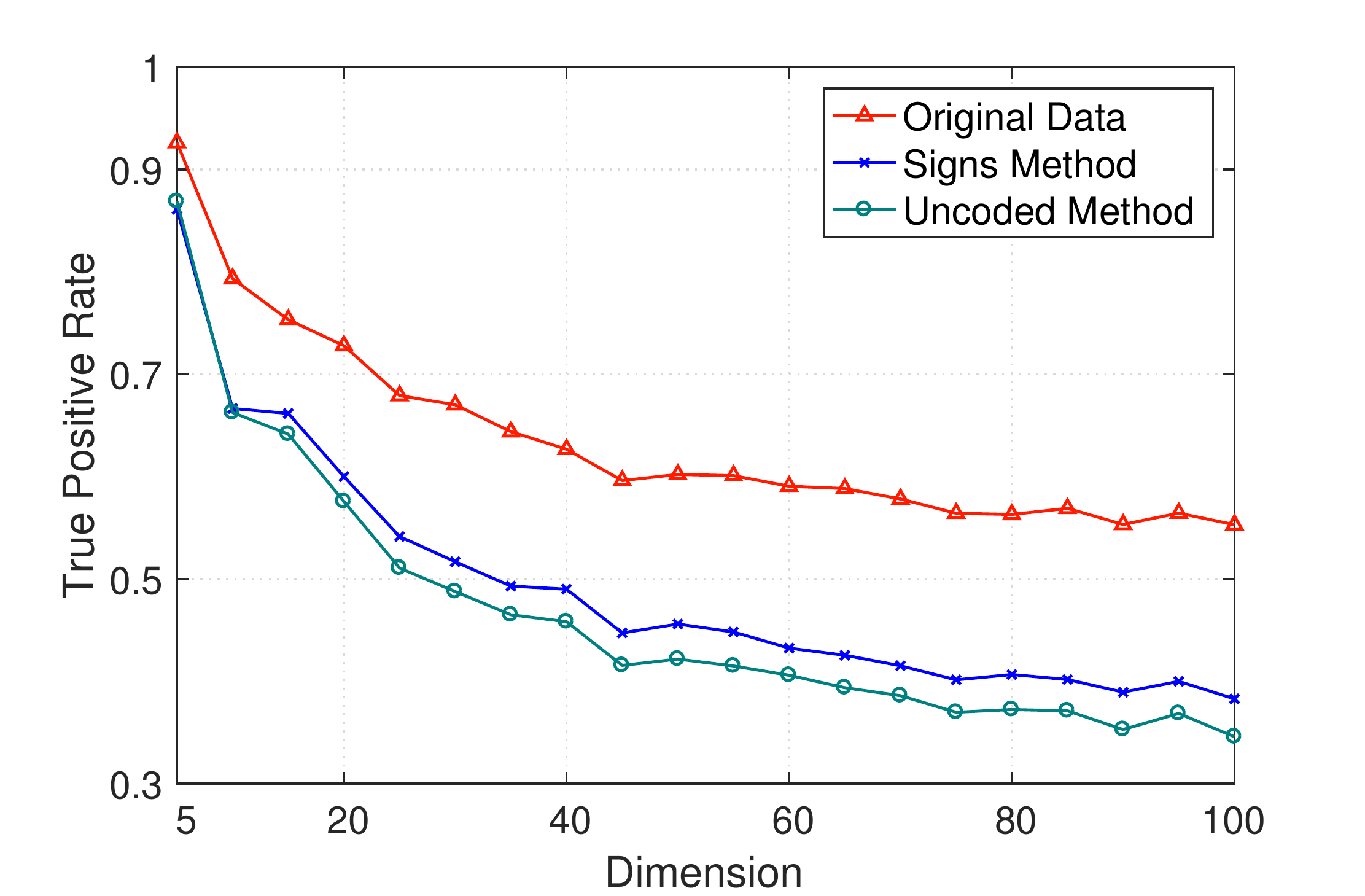}  \label{fig:TPR-in-dim-1000}}
  \subfloat[$n=1000$]{\includegraphics[width=.24\linewidth]{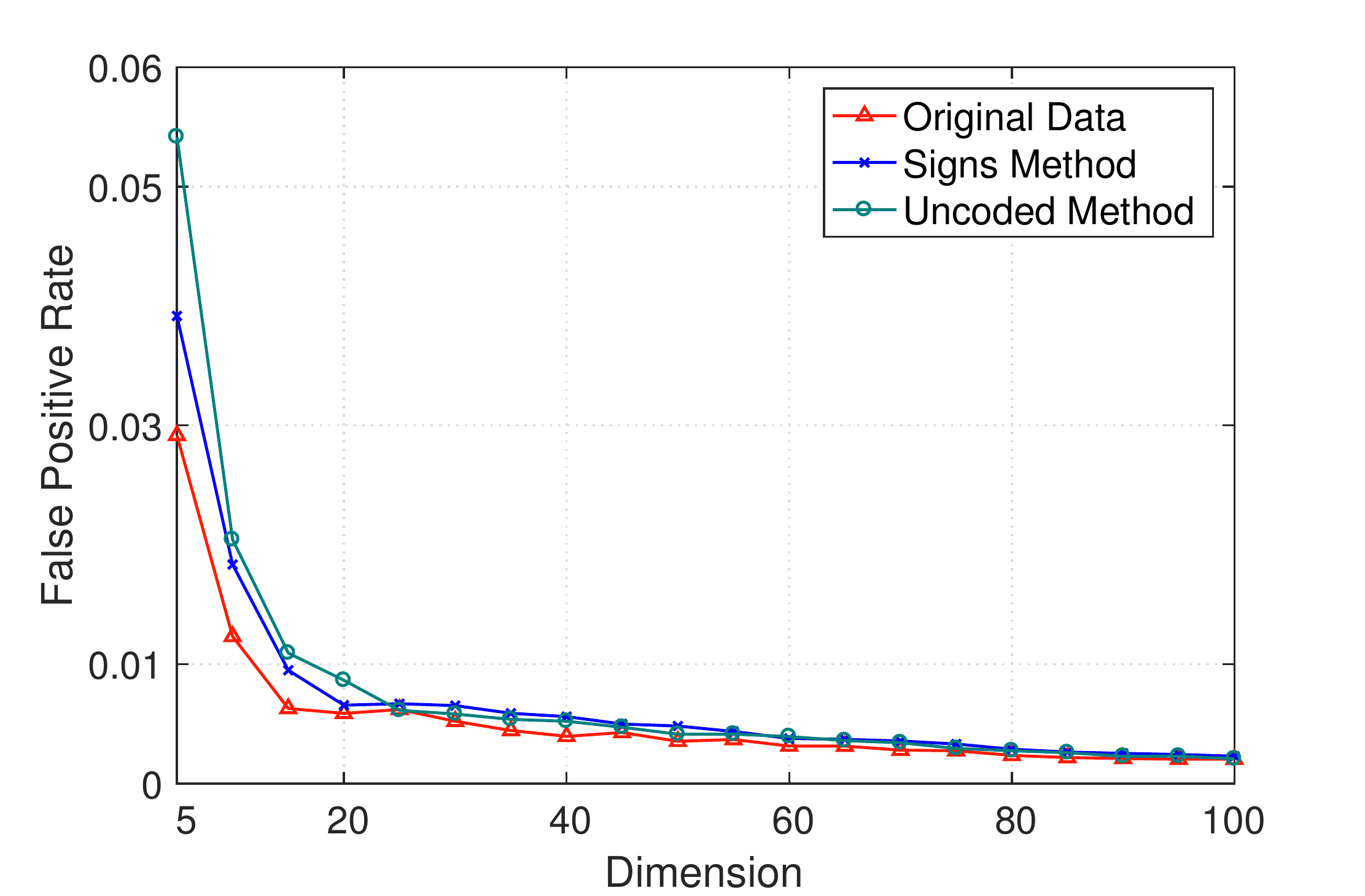}  \label{fig:FPR-in-dim-1000}}
  \subfloat[$n=10,000$]{\includegraphics[width=.24\linewidth]{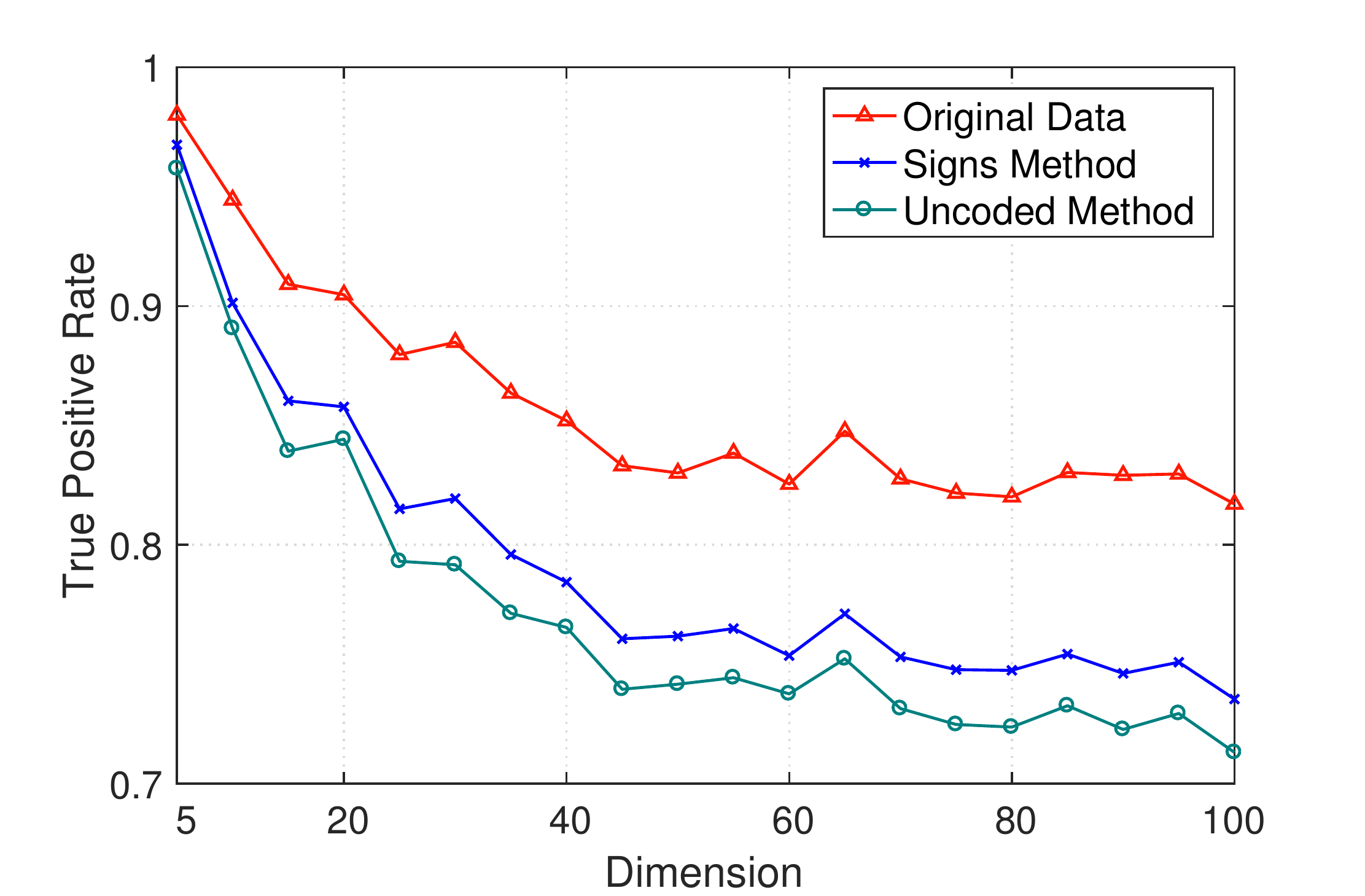}  \label{fig:TPR-in-dim-10000}}
  \subfloat[$n=10,000$]{\includegraphics[width=.24\linewidth]{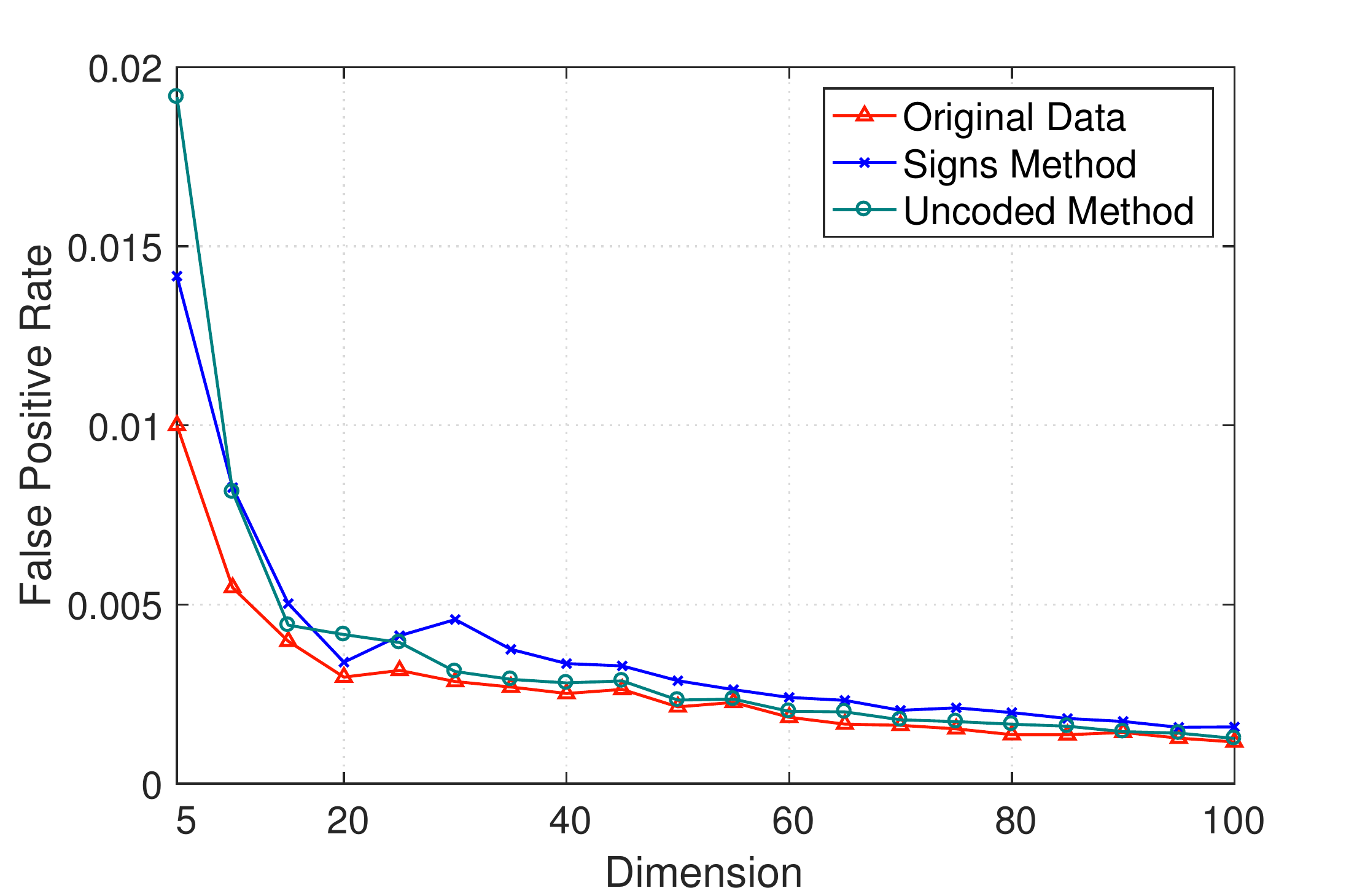}  \label{fig:FPR-in-dim-10000}}
 \caption{TPR and FPR as a function of dimension $d$ for sample sizes $n=1000$ and $n=10,000$}
  \label{fig:error-in-dim}
\end{figure*}

 \begin{figure*}[t]
  \centering
  \subfloat[$d=50$]{\includegraphics[width=.24\linewidth]{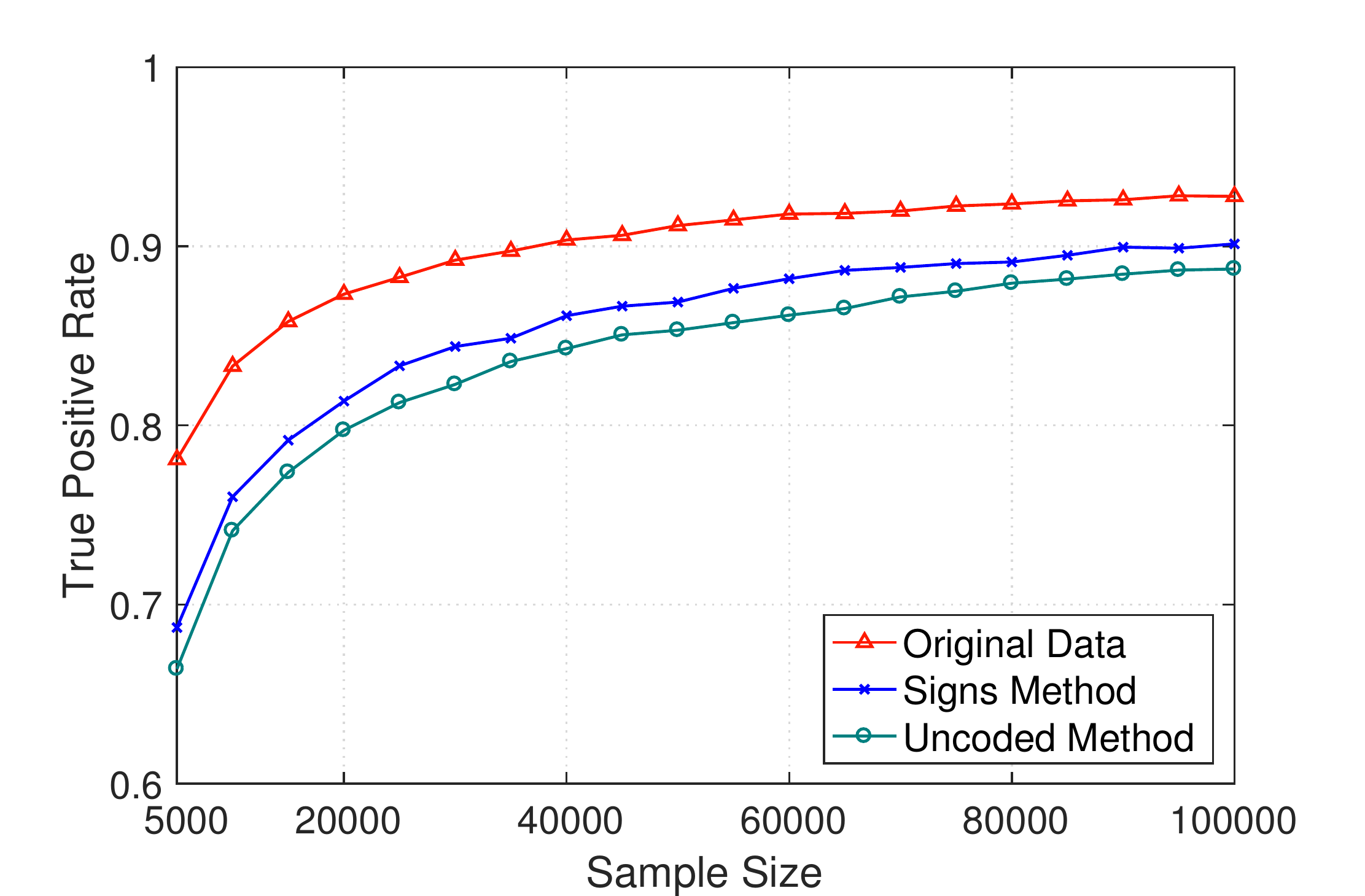}  \label{fig:TPR-in-sam-50}}
  \subfloat[$d=50$]{\includegraphics[width=.24\linewidth]{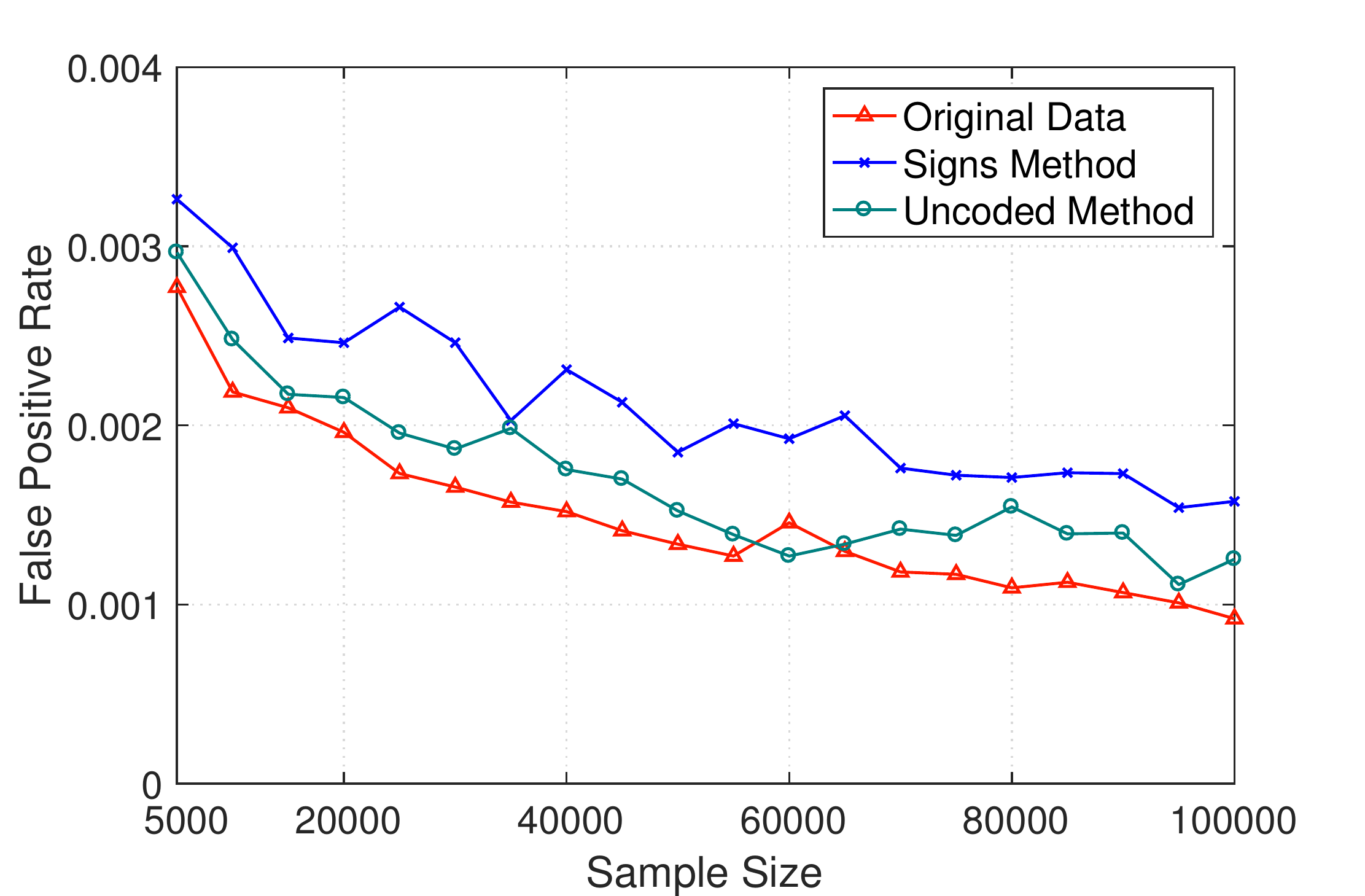}  \label{fig:FPR-in-sam-50}}
  \subfloat[$d=100$]{\includegraphics[width=.24\linewidth]{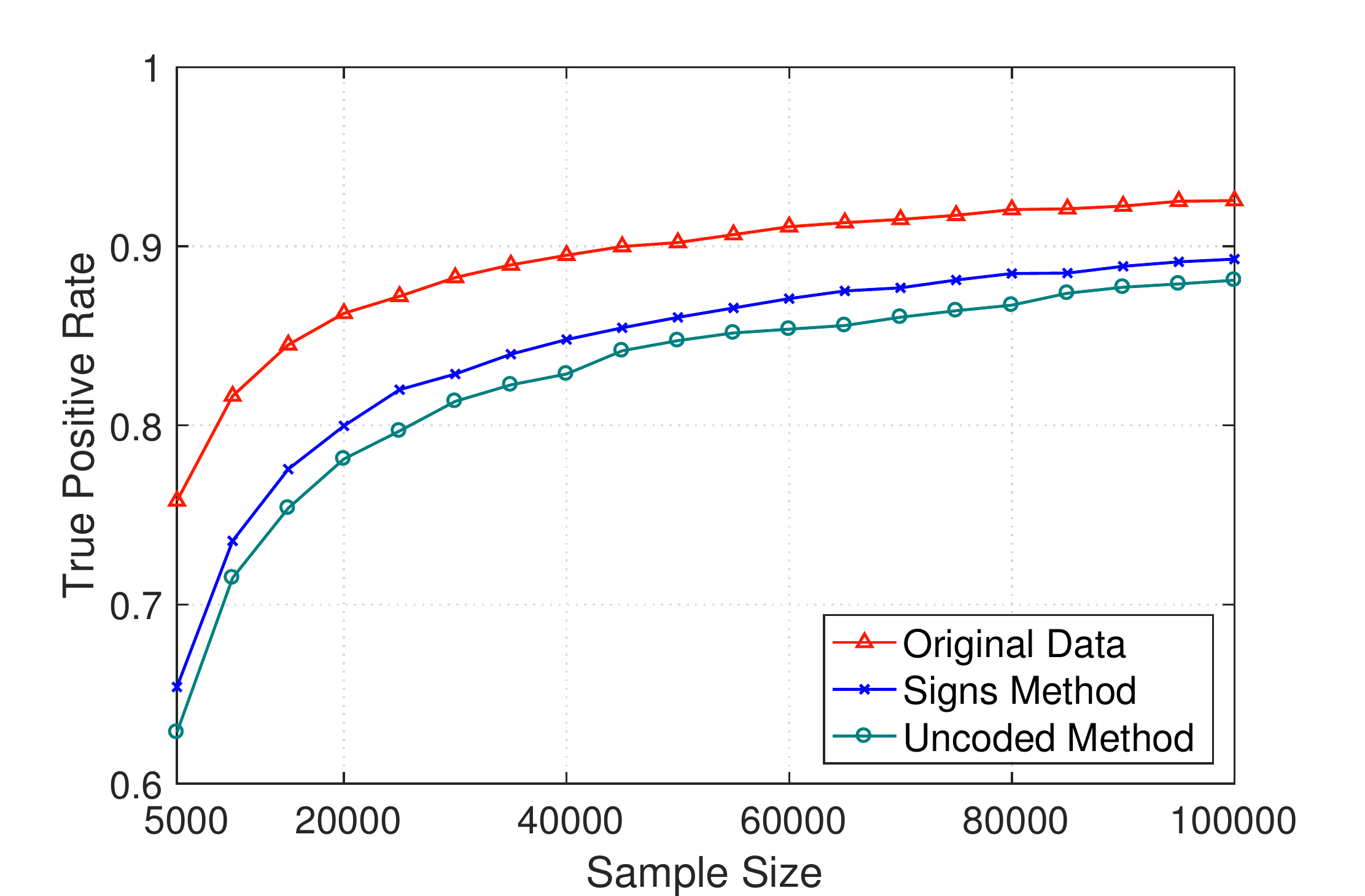}  \label{fig:TPR-in-sam-100}}
  \subfloat[$d=100$]{\includegraphics[width=.24\linewidth]{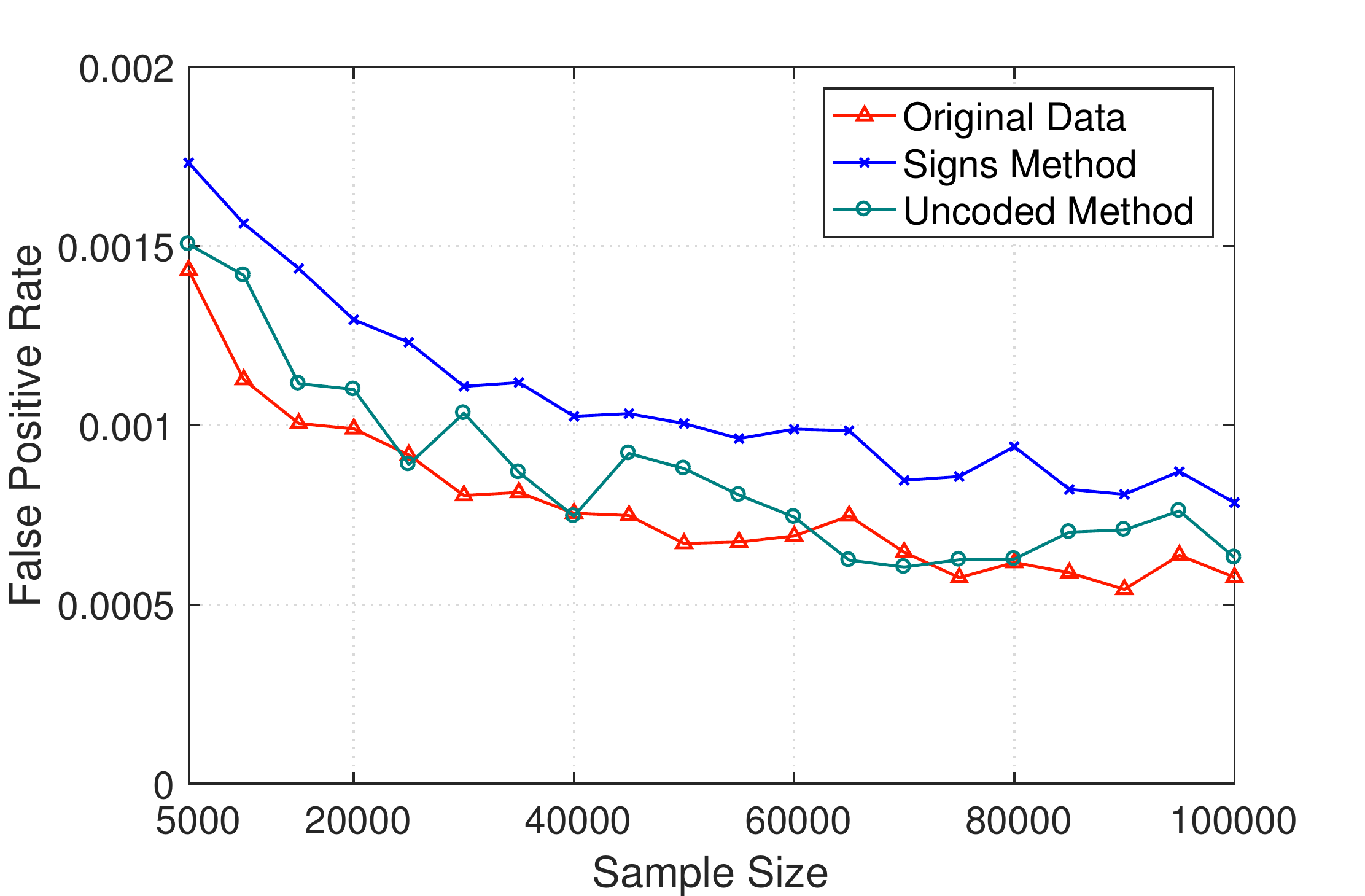}  \label{fig:FPR-in-sam-100}}
  \caption{TPR and FPR as a function of the sample size $n$ for random graphs with 50 and 100 nodes.}
  \label{fig:error-in-sam}
\end{figure*}

In \figurename{}~\ref{fig:pofe}, the probability of perfect structure recovery for a star-shaped graph is depicted. In this experiment, the underlying star graph consists of $d=70$ nodes. The precision matrix is generated as the inverse of a covariance matrix with $(Q_x)_{ij}=\frac{1}{4}$ for all $(i, j) \in \mathcal{E}$ which satisfies Assumption \ref{assump: incoherence}. The probability of perfect recovery is estimated by running the proposed methods 100 times and counting the number of times that the structure is recovered exactly. As can be seen from the figure, all methods recover the structure exactly for large enough sample sizes as claimed by theorems \ref{thm:binary main theorem} and \ref{thm:uncoded}. 
		
\begin{figure}[t]
    \centering
    \includegraphics[width=.8\linewidth]{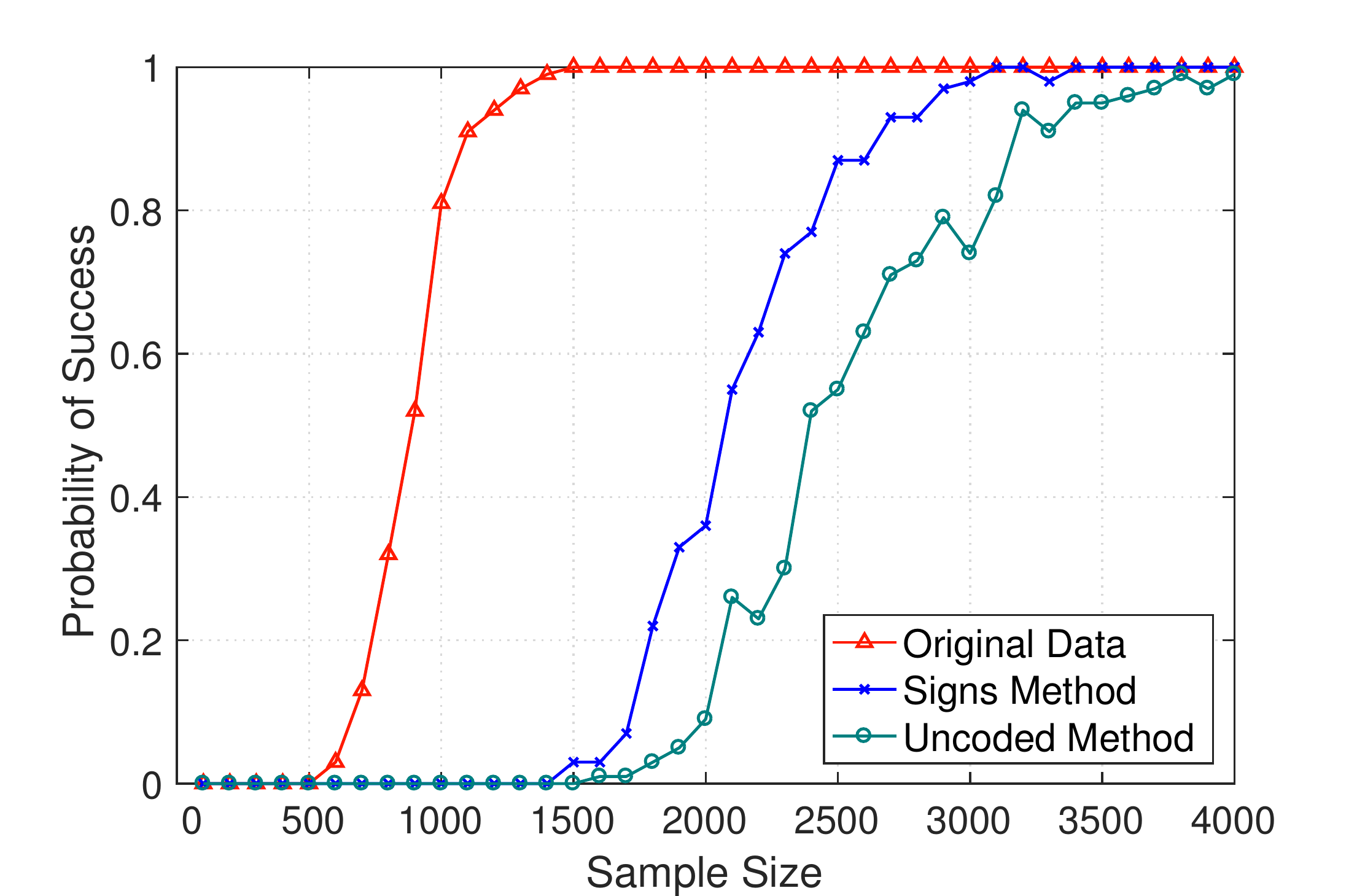}
    \caption{Probability of perfect structure recovery for the star graph with $d=70$.}
    \label{fig:pofe}
\end{figure}
	
In \figurename{}~\ref{fig:TPR-in-SNR}, we measure the TPR of Uncoded method for different values of the SNR. The experiment is performed on a random graph with $d=40$ nodes with maximum degree of $\Delta=5$ and $n=10,000$. In this experiment, we have generated the channel matrix $H$ with entries drawn from i.i.d. standard normal samples. The TPR curve is averaged over 100 different channel matrices. As can be seen from \figurename{}~\ref{fig:TPR-in-SNR}, for SNR greater than 5, the performance of Uncoded method is very close to the TPR of the original data.

The FPR curve is not plotted, since the error values were negligible even for small SNR.

\begin{figure}[t]
\centering
  \includegraphics[width=.8\linewidth]{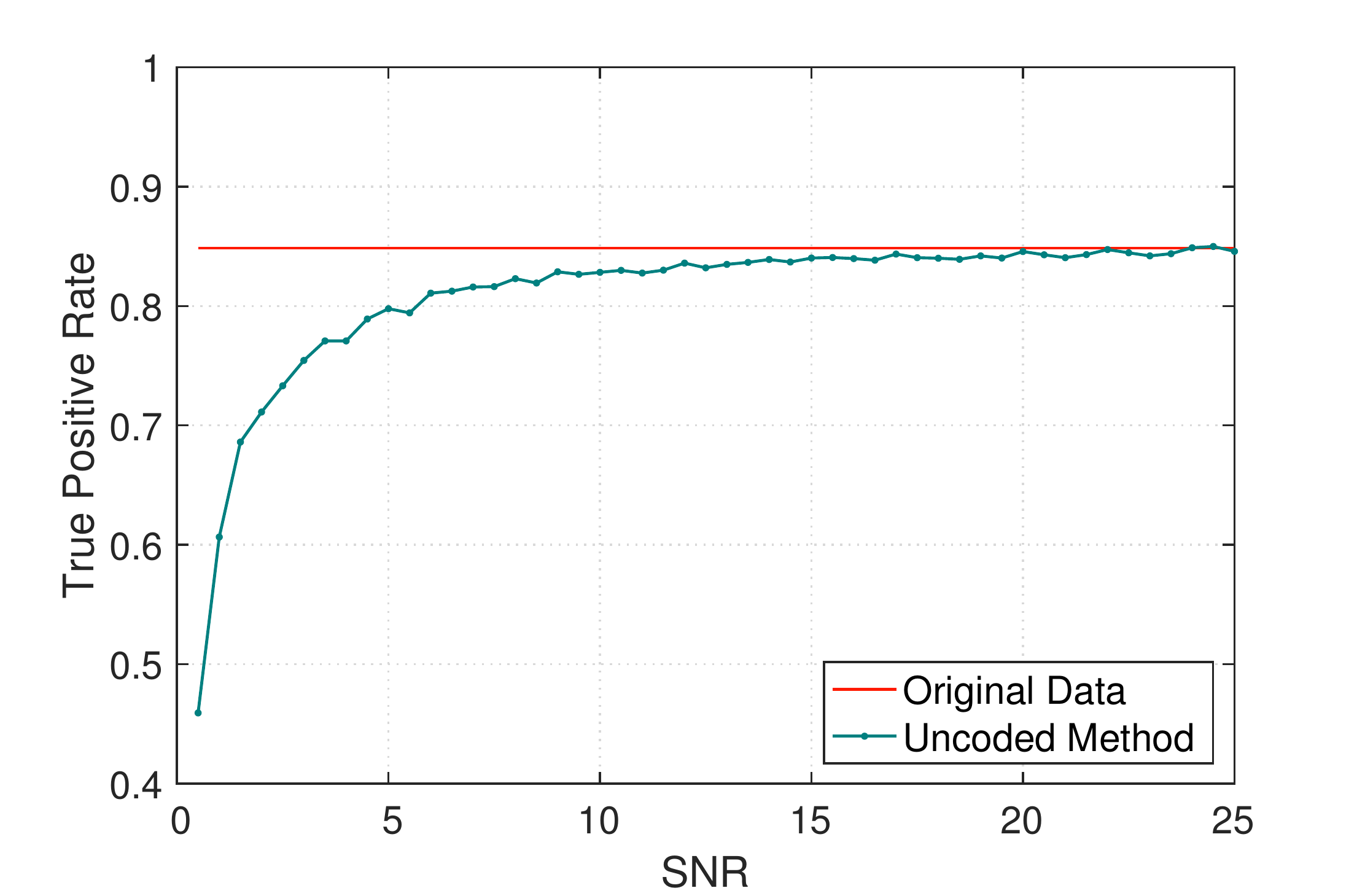}
  \caption{TPR as a function of SNR in Uncoded method for a random graph with $d=40$ and maximum degree $\Delta=5$.}
      \label{fig:TPR-in-SNR}
\end{figure}

\section{Conclusion} \label{sec:conclusion}
In this paper, we have studied the sparse structure learning of GGMs where the data are distributed across multiple local machines. Two methods are proposed to send information from the local machines to the central machine, namely, Signs and Uncoded methods. We have analytically and experimentally shown that the central machine can recover the underlying graph if large enough sample sizes are transmitted to the central machine. 

Our experiments show that, under the same conditions, Signs method outperforms the uncoded scheme. Both methods have small FPR which is close to the FPR obtained by the original data.

\section{Acknowledgment}
We thank Amir Najafi and Amir-Hossein Saberi for their valuable comments that greatly improved the manuscript.

\ifCLASSOPTIONcaptionsoff
  \newpage
\fi

\bibliographystyle{IEEEtran}
\bibliography{IEEEabrv,./refs}

\begin{thebibliography}{10}
\providecommand{\url}[1]{#1}
\csname url@samestyle\endcsname
\providecommand{\newblock}{\relax}
\providecommand{\bibinfo}[2]{#2}
\providecommand{\BIBentrySTDinterwordspacing}{\spaceskip=0pt\relax}
\providecommand{\BIBentryALTinterwordstretchfactor}{4}
\providecommand{\BIBentryALTinterwordspacing}{\spaceskip=\fontdimen2\font plus
\BIBentryALTinterwordstretchfactor\fontdimen3\font minus
  \fontdimen4\font\relax}
\providecommand{\BIBforeignlanguage}[2]{{%
\expandafter\ifx\csname l@#1\endcsname\relax
\typeout{** WARNING: IEEEtran.bst: No hyphenation pattern has been}%
\typeout{** loaded for the language `#1'. Using the pattern for}%
\typeout{** the default language instead.}%
\else
\language=\csname l@#1\endcsname
\fi
#2}}
\providecommand{\BIBdecl}{\relax}
\BIBdecl

\bibitem{chai2014review}
L.~E. Chai, S.~K. Loh, S.~T. Low, M.~S. Mohamad, S.~Deris, and Z.~Zakaria, ``A
  review on the computational approaches for gene regulatory network
  construction,'' \emph{Computers in biology and medicine}, vol.~48, pp.
  55--65, 2014.

\bibitem{hecker2009gene}
M.~Hecker, S.~Lambeck, S.~Toepfer, E.~Van~Someren, and R.~Guthke, ``Gene
  regulatory network inference: data integration in dynamic models—a
  review,'' \emph{Biosystems}, vol.~96, no.~1, pp. 86--103, 2009.

\bibitem{xiang2010modeling}
R.~Xiang, J.~Neville, and M.~Rogati, ``Modeling relationship strength in online
  social networks,'' in \emph{Proceedings of the 19th international conference
  on World wide web}.\hskip 1em plus 0.5em minus 0.4em\relax ACM, 2010, pp.
  981--990.

\bibitem{ravikumar2011high}
P.~Ravikumar, M.~J. Wainwright, G.~Raskutti, B.~Yu \emph{et~al.},
  ``High-dimensional covariance estimation by minimizing $\ell_1$-penalized
  log-determinant divergence,'' \emph{Electronic Journal of Statistics},
  vol.~5, pp. 935--980, 2011.

\bibitem{sojoudi2016equivalence}
S.~Sojoudi, ``Equivalence of graphical lasso and thresholding for sparse
  graphs,'' \emph{The Journal of Machine Learning Research}, vol.~17, no.~1,
  pp. 3943--3963, 2016.

\bibitem{banerjee2008model}
O.~Banerjee, L.~E. Ghaoui, and A.~d’Aspremont, ``Model selection through
  sparse maximum likelihood estimation for multivariate gaussian or binary
  data,'' \emph{Journal of Machine learning research}, vol.~9, no. Mar, pp.
  485--516, 2008.

\bibitem{chow1968approximating}
C.~Chow and C.~Liu, ``Approximating discrete probability distributions with
  dependence trees,'' \emph{IEEE transactions on Information Theory}, vol.~14,
  no.~3, pp. 462--467, 1968.

\bibitem{anandkumar2012high}
A.~Anandkumar, V.~Y. Tan, F.~Huang, and A.~S. Willsky, ``High-dimensional
  gaussian graphical model selection: Walk summability and local separation
  criterion,'' \emph{Journal of Machine Learning Research}, vol.~13, no. Aug,
  pp. 2293--2337, 2012.

\bibitem{tavassolipour2018learning}
M.~Tavassolipour, S.~A. Motahari, and M.~T.-M. Shalmani, ``Learning of
  tree-structured gaussian graphical models on distributed data under
  communication constraints,'' \emph{IEEE Transactions on Signal Processing},
  2018.

\bibitem{tan2011learning}
V.~Y. Tan, A.~Anandkumar, and A.~S. Willsky, ``Learning high-dimensional markov
  forest distributions: Analysis of error rates,'' \emph{Journal of Machine
  Learning Research}, vol.~12, no. May, pp. 1617--1653, 2011.

\bibitem{tan2011large}
V.~Y. Tan, A.~Anandkumar, L.~Tong, and A.~S. Willsky, ``A large-deviation
  analysis of the maximum-likelihood learning of markov tree structures,''
  \emph{IEEE Transactions on Information Theory}, vol.~57, no.~3, pp.
  1714--1735, 2011.

\bibitem{meinshausen2006high}
N.~Meinshausen, P.~B{\"u}hlmann \emph{et~al.}, ``High-dimensional graphs and
  variable selection with the lasso,'' \emph{The annals of statistics},
  vol.~34, no.~3, pp. 1436--1462, 2006.

\bibitem{chen2014selection}
S.~Chen, D.~M. Witten, and A.~Shojaie, ``Selection and estimation for mixed
  graphical models,'' \emph{Biometrika}, vol. 102, no.~1, pp. 47--64, 2014.

\bibitem{hsieh2014quic}
C.-J. Hsieh, M.~A. Sustik, I.~S. Dhillon, and P.~Ravikumar, ``Quic: quadratic
  approximation for sparse inverse covariance estimation,'' \emph{The Journal
  of Machine Learning Research}, vol.~15, no.~1, pp. 2911--2947, 2014.

\bibitem{rothman2008sparse}
A.~J. Rothman, P.~J. Bickel, E.~Levina, J.~Zhu \emph{et~al.}, ``Sparse
  permutation invariant covariance estimation,'' \emph{Electronic Journal of
  Statistics}, vol.~2, pp. 494--515, 2008.

\bibitem{meng2013distributed}
Z.~Meng, D.~Wei, A.~Wiesel, and A.~Hero~III, ``Distributed learning of gaussian
  graphical models via marginal likelihoods,'' in \emph{Artificial Intelligence
  and Statistics}, 2013, pp. 39--47.

\bibitem{wiesel2012distributed}
A.~Wiesel and A.~O. Hero, ``Distributed covariance estimation in gaussian
  graphical models,'' \emph{IEEE Transactions on Signal Processing}, vol.~60,
  no.~1, pp. 211--220, 2012.

\bibitem{arroyo2016efficient}
J.~Arroyo and E.~Hou, ``Efficient distributed estimation of inverse covariance
  matrices,'' in \emph{Statistical Signal Processing Workshop (SSP), 2016
  IEEE}.\hskip 1em plus 0.5em minus 0.4em\relax IEEE, 2016, pp. 1--5.

\bibitem{friedman2008sparse}
J.~Friedman, T.~Hastie, and R.~Tibshirani, ``Sparse inverse covariance
  estimation with the graphical lasso,'' \emph{Biostatistics}, vol.~9, no.~3,
  pp. 432--441, 2008.

\bibitem{tse2005fundamentals}
D.~Tse and P.~Viswanath, \emph{Fundamentals of wireless communication}.\hskip
  1em plus 0.5em minus 0.4em\relax Cambridge university press, 2005.

\bibitem{bacon1963approximations}
R.~H. Bacon, ``Approximations to multivariate normal orthant probabilities,''
  \emph{The Annals of Mathematical Statistics}, vol.~34, no.~1, pp. 191--198,
  1963.

\bibitem{el2017rate}
M.~El~Gamal and L.~Lai, ``On rate requirements for achieving the centralized
  performance in distributed estimation,'' \emph{IEEE Transactions on Signal
  Processing}, vol.~65, no.~8, pp. 2020--2032, 2017.

\end{thebibliography}

\begin{IEEEbiography}[{\includegraphics[width=1in,height=1.25in,clip,keepaspectratio]{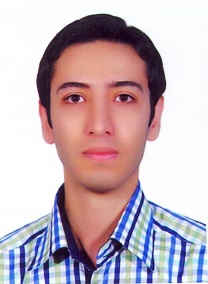}}]{Mostafa Tavassolipour}
	received the B.Sc. degree from Shahed University, Tehran, Iran, in 2009, and the M.Sc. degree from Computer Engineering department of Sharif University of Technology (SUT), Tehran, Iran, in 2011. Currently, He is a Ph.D. student of Artificial Intelligence program at Computer Engineering Department of Sharif University of Technology. His research interests include machine learning, image processing, information theory, content based video analysis, and bioinformatics.
\end{IEEEbiography}
\begin{IEEEbiography}[{\includegraphics[width=1in,height=1.25in,clip,keepaspectratio]{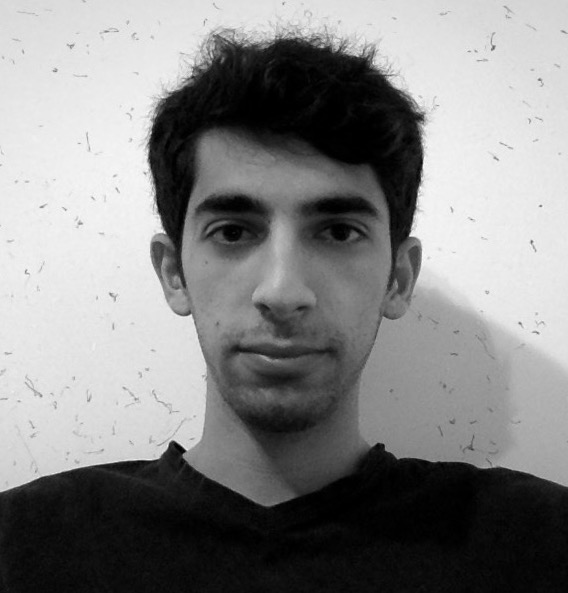}}]{Armin Karamzade} is M. Sc. student of Artificial Intelligence at the Sharif University of Technology, Tehran, Iran. He received his B. Sc. degree in Computer Engineering from Iran University of Science and Technology, Tehran, Iran, in 2017. His research interests include mostly machine learning, optimization, and statistic.
	
\end{IEEEbiography}
\begin{IEEEbiography}[{\includegraphics[width=1in,height=1.25in,clip,keepaspectratio]{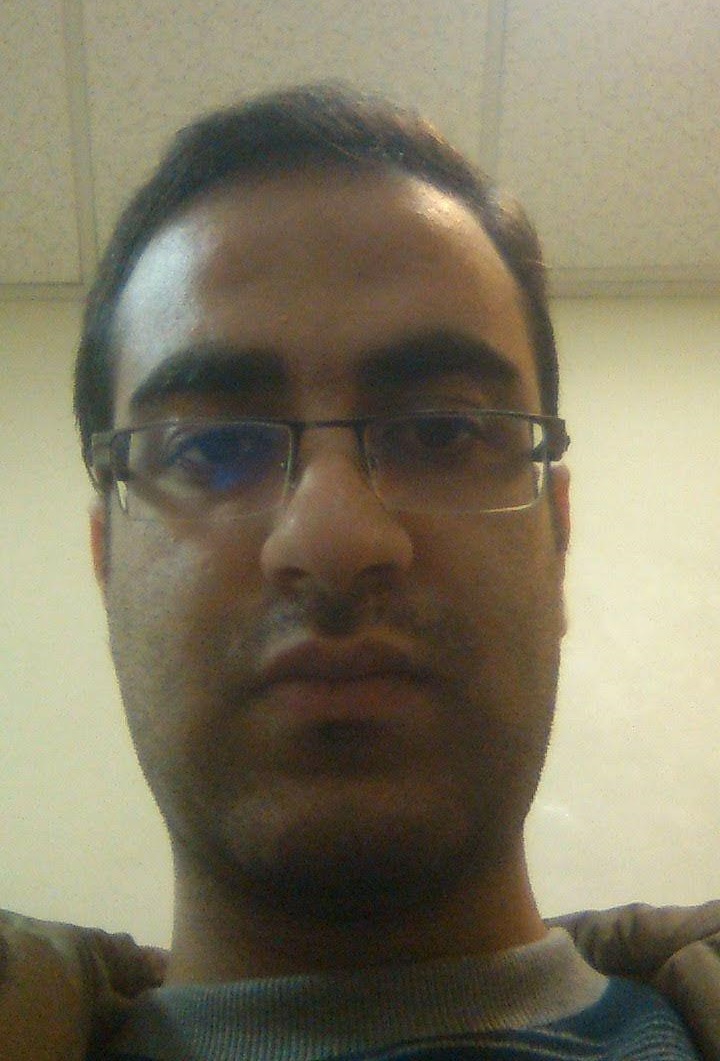}}]{Reza Mirzaeifard} received the B.Sc. degree of computer engineering from Shahid Beheshti University, Tehran, Iran, in 2017. Currently, He is M.Sc. student of Artificial Intelligence program at Computer Engineering Department of Sharif University of Technology. His research interests include machine learning, information theory, tensor decomposition, bioinformatics.
	
\end{IEEEbiography}
\begin{IEEEbiography}[{\includegraphics[width=1in,height=1.25in,clip,keepaspectratio]{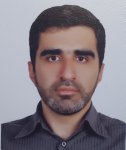}}]{Seyed Abolfazl Motahari}
	is an assistant professor at Computer Engineering Department of Sharif University of Technology (SUT). He received his B.Sc. degree from the Iran University of Science and Technology (IUST), Tehran, in 1999, the M.Sc. degree from Sharif University of Technology, Tehran, in 2001, and the Ph.D. degree from University of Waterloo, Waterloo, Canada, in 2009, all in electrical engineering. From October 2009 to September 2010, he was a Postdoctoral Fellow with the University of Waterloo, Waterloo. From September 2010 to July 2013, he was a Postdoctoral Fellow with the Department of Electrical Engineering and Computer Sciences, University of California at Berkeley. His research interests include multiuser information theory and Bioinformatics. He received several awards including Natural Science and Engineering Research Council of Canada (NSERC) Post-Doctoral Fellowship.
\end{IEEEbiography}
\begin{IEEEbiography}[{\includegraphics[width=1in,height=1.25in,clip,keepaspectratio]{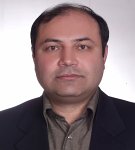}}]{Mohammad-Taghi Manzuri Shalmani}
	received the B.Sc. and M.Sc.
	in electrical engineering from Sharif University of
	Technology (SUT), Iran, in 1984 and 1988, respectively.
	He received the Ph.D. degree in electrical and
	computer engineering from the Vienna University
	of Technology, Austria, in 1995. Currently, he is an
	associate professor in the Computer Engineering
	Department, Sharif University of Technology,
	Tehran, Iran. His main research interests include
	digital signal processing, stochastic modeling, and
	Multi-resolution signal processing.
\end{IEEEbiography}
\vfill




\end{document}